%%%%%%%% ICML 2024 EXAMPLE LATEX SUBMISSION FILE %%%%%%%%%%%%%%%%%

\documentclass{article}

% Recommended, but optional, packages for figures and better typesetting:
\usepackage{microtype}
\usepackage{graphicx}
\usepackage{booktabs} % for professional tables

% hyperref makes hyperlinks in the resulting PDF.
% If your build breaks (sometimes temporarily if a hyperlink spans a page)
% please comment out the following usepackage line and replace
% \usepackage{icml2024} with \usepackage[nohyperref]{icml2024} above.
\usepackage{hyperref}

% Attempt to make hyperref and algorithmic work together better:

% Use the following line for the initial blind version submitted for review:
% \usepackage{icml2024}

% If accepted, instead use the following line for the camera-ready submission:
\usepackage[accepted]{icml2024}

% For theorems and such
\usepackage{amsmath}
\usepackage{amssymb}
\usepackage{mathtools}
\usepackage{amsthm}
\usepackage{xspace}

% if you use cleveref..
\usepackage[capitalize,noabbrev]{cleveref}
\usepackage{thmtools,thm-restate}
%%%%%%%%%%%%%%%%%%%%%%%%%%%%%%%%
% THEOREMS
%%%%%%%%%%%%%%%%%%%%%%%%%%%%%%%%
\theoremstyle{plain}
\newtheorem{theorem}{Theorem}[section]

\newtheorem{lemma}[theorem]{Lemma}

\theoremstyle{definition}

\newcommand{\proj}{HEPT\xspace}
\usepackage{enumitem}
\usepackage{multirow}
\usepackage{graphicx}
\usepackage{wrapfig}
\usepackage{caption}
\usepackage{subcaption}

\usepackage{bm}

\newcommand{\wt}[1]{\widetilde{#1}}

% Todonotes is useful during development; simply uncomment the next line
%    and comment out the line below the next line to turn off comments
%\usepackage[disable,textsize=tiny]{todonotes}
\usepackage[textsize=tiny]{todonotes}

% The \icmltitle you define below is probably too long as a header.
% Therefore, a short form for the running title is supplied here:
\icmltitlerunning{Locality-Sensitive Hashing-Based Efficient Point Transformer}

\begin{document}

\twocolumn[
\icmltitle{Locality-Sensitive Hashing-Based Efficient Point Transformer \\ with Applications in High-Energy Physics}

% It is OKAY to include author information, even for blind
% submissions: the style file will automatically remove it for you
% unless you've provided the [accepted] option to the icml2024
% package.

% List of affiliations: The first argument should be a (short)
% identifier you will use later to specify author affiliations
% Academic affiliations should list Department, University, City, Region, Country
% Industry affiliations should list Company, City, Region, Country

% You can specify symbols, otherwise they are numbered in order.
% Ideally, you should not use this facility. Affiliations will be numbered
% in order of appearance and this is the preferred way.
\icmlsetsymbol{equal}{*}

\begin{icmlauthorlist}
\icmlauthor{Siqi Miao}{gt}
\icmlauthor{Zhiyuan Lu}{bupt}
\icmlauthor{Mia Liu}{pu}
\icmlauthor{Javier Duarte}{ucsd}
\icmlauthor{Pan Li}{gt}
% \icmlauthor{Firstname6 Lastname6}{sch,yyy,comp}
% \icmlauthor{Firstname7 Lastname7}{comp}
%\icmlauthor{}{sch}
% \icmlauthor{Firstname8 Lastname8}{sch}
% \icmlauthor{Firstname8 Lastname8}{yyy,comp}
%\icmlauthor{}{sch}
%\icmlauthor{}{sch}
\end{icmlauthorlist}

\icmlaffiliation{gt}{Georgia Institute of Technology}
\icmlaffiliation{bupt}{Beijing University of Posts and Telecommunications}
\icmlaffiliation{pu}{Purdue University}
\icmlaffiliation{ucsd}{University of California San Diego}

\icmlcorrespondingauthor{Siqi Miao}{siqi.miao@gatech.edu}
\icmlcorrespondingauthor{Pan Li}{panli@gatech.edu}

% You may provide any keywords that you
% find helpful for describing your paper; these are used to populate
% the "keywords" metadata in the PDF but will not be shown in the document
\icmlkeywords{Machine Learning, Geometric Deep Learning, Efficient Transformer, High-Energy Physics, AI for Science}

\vskip 0.3in
]

% this must go after the closing bracket ] following \twocolumn[ ...

% This command actually creates the footnote in the first column
% listing the affiliations and the copyright notice.
% The command takes one argument, which is text to display at the start of the footnote.
% The \icmlEqualContribution command is standard text for equal contribution.
% Remove it (just {}) if you do not need this facility.

\printAffiliationsAndNotice{}  % leave blank if no need to mention equal contribution
% \printAffiliationsAndNotice{\icmlEqualContribution} % otherwise use the standard text.

\begin{abstract}
This study introduces a novel transformer model optimized for large-scale point cloud processing in scientific domains such as high-energy physics (HEP) and astrophysics. Addressing the limitations of graph neural networks and standard transformers, our model integrates local inductive bias and achieves near-linear complexity with hardware-friendly regular operations. One contribution of this work is the quantitative analysis of the error-complexity tradeoff of various sparsification techniques for building efficient transformers. Our findings highlight the superiority of using locality-sensitive hashing (LSH), especially OR \& AND-construction LSH, in kernel approximation for large-scale point cloud data with local inductive bias. Based on this finding, we propose LSH-based Efficient Point Transformer (\textbf{\proj}), which combines E$^2$LSH with OR \& AND constructions and is built upon regular computations. \proj demonstrates remarkable performance on two critical yet time-consuming HEP tasks, significantly outperforming existing GNNs and transformers in accuracy and computational speed, marking a significant advancement in geometric deep learning and large-scale scientific data processing. Our code is available at \url{https://github.com/Graph-COM/HEPT}.
\end{abstract}

\vspace{-5mm}
\section{Introduction}\label{sec:intro}
\vspace{-1mm}
Many scientific applications require the processing of complex research objects, often represented as large-scale point clouds --- a set of points within a geometric space --- in real time. 
For instance, in high-energy physics (HEP)~\cite{radovic2018machine}, to search for new physics beyond the standard model, e.g., new particles predicted by supersymmetric theories~\cite{oerter2006theory, wess1992supersymmetry}, the CERN Large Hadron Collider (LHC) produces 1 billion particle collisions per second, forming point clouds of detector measurements with tens of thousands of points~\cite{gaillard2017cern},
necessitating real-time analysis due to storage limitations. Similarly, in astrophysics~\cite{halzen2010invited}, the IceCube Neutrino Observatory records 3,000 events per second using over 5,000 sensors~\cite{aartsen2017icecube}, and simulations of galaxy formation and evolution need to run billions of particles~\cite{nelson2019illustristng}. In drug discovery applications, large-scale real-time computation is crucial for screening billions of protein-antibody pairs, each requiring molecular dynamics simulations of systems with thousands of atoms~\cite{durrant2011molecular}. Facing these extensive computational demands, machine learning, in particular geometric deep learning (GDL), has emerged as a revolutionary tool, offering to replace the most resource-intensive parts of these processes ~\cite{bronstein2017geometric, bronstein2021geometric}.

\begin{figure*}[ht]
    \centering
    \vspace{-0.15cm}
    \includegraphics[trim={0.4cm 0.3cm 0.0cm 0cm},clip,width=0.85\linewidth]{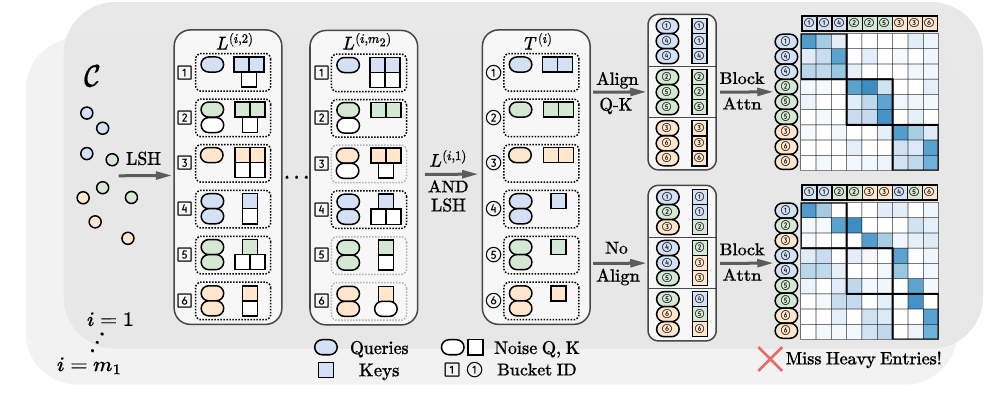}
    \vspace{-0.25cm}
    \caption{Pipeline of \proj. Elements that share the same color represent points from the same local neighborhood.
    \proj employs OR \& AND LSH to minimize noise caused by individual hash functions. \proj also integrates point coordinates as extra AND LSH codes for query-key alignment, maintaining computational regularity without compromising accuracy.
    }
    \label{fig:arch}
    \vspace{-0.5cm}
\end{figure*}

In these scientific applications, inference tasks often exhibit local inductive bias, meaning that the labels to be predicted are primarily determined by aggregating information from local regions within the ambient space. Leveraging local inductive bias can significantly reduce computational complexity. Consequently, graph neural networks (GNNs) have gained widespread use due to their proficiency in exploiting such sparse data patterns~\cite{jing2020learning, Kansal:2021cqp, satorras2021n, dezoort2021charged, abbasi2022graph, li2023semi}. However, GNNs still face two computational challenges that hinder their application in real-time scenarios. \emph{First, the procedure of graph construction is time-consuming:} GNNs often use k-NN or other relational rules to construct graphs~\cite{stark2022equibind, lieret2023high}. Creating these graphs from $n$ points using brute-force methods involves $\mathcal{O}(n^2)$ complexity. While algorithms like KD-trees theoretically offer $\mathcal{O}(n\log n)$ complexity, their limited parallelizability makes them impractical for real-time data processing pipelines~\cite{wieschollek2016efficient}. \emph{Second, The irregular structure of graphs and the neighborhood aggregation process in GNNs lead to irregular computations and random memory access.} These factors, coupled with dynamic computation graphs for different inputs, pose significant computation challenges for conventional hardware architectures~\cite{jang2010exploiting, hashemi2018learning, abadal2021computing}. These issues render GNNs less suitable for large-scale real-time point cloud analysis. Therefore, there is a growing interest in exploring alternative approaches to address the above challenges.

Recently, transformer architectures have demonstrated impressive capabilities across various domains~\cite{vaswani2017attention, brown2020language}. Unlike GNNs, transformers are noted for their ability to model long-range dependencies and their compatibility with hardware due to regular computation patterns. However, a major limitation of standard transformers is their quadratic complexity to input size, which poses challenges for processing large-scale point cloud data. In this study, we aim to integrate the strengths of both transformers and GNNs by developing an efficient transformer model for point cloud processing. This model incorporates local inductive bias and may achieve near-linear complexity, balancing high accuracy with hardware efficiency through regular and parallelizable computations.

Several studies have been conducted on efficient transformers.
However, efficient transformers are not yet fully embraced by scientific domains dealing with geometric data~\cite{Kansal:2022spb, pata2023improved}. The primary issue is that existing methods, which use low-rank~\cite{wang2020linformer} or sparse~\cite{kitaev2020reformer} approximations of the attention matrix, often overlook approximation errors in method design~\cite{kitaev2020reformer, daras2020smyrf} or fail to adequately consider local inductive bias~\cite{choromanski2020rethinking, peng2021random}, leading to undesired model performance. Some methods may compromise approximation accuracy for computational regularity~\cite{kitaev2020reformer, daras2020smyrf, zandieh2023kdeformer, han2023hyperattention}. Moreover, a systematic understanding of the tradeoff between approximation errors and computational complexity among the different methods is missing, making it difficult to select the most effective method for GDL tasks.

In this work, we close the gap by conducting a quantitative analysis of the error-complexity tradeoff, focusing on two widely used techniques for building efficient transformers:
random Fourier features (RFF) ~\cite{rahimi2007random} and locality-sensitive hashing (LSH)~\cite{indyk1998approximate}.
Our analysis indicates that for tasks with local inductive bias, RFF consistently exhibits higher approximation error compared to LSH under subquadratic complexity. We also discover that relying solely on OR-construction LSH results in suboptimal performance, and combining OR \& AND-construction LSH~\cite{leskovec2020mining}, often ignored in prior research, is essential to minimize errors for point clouds in large multidimensional spaces.

Inspired by the analysis, we propose an LS\textbf{H}-based \textbf{E}fficient \textbf{P}oint \textbf{T}ransformer (\textbf{\proj}), designed to support highly regular computations with near-linear complexity and provably low approximation errors for tasks with local inductive bias. \proj leverages a kernel that explicitly embeds local inductive bias for attention calculation, and adopts E$^2$LSH combined with both OR \& AND LSH to effectively minimize approximation errors. 
To ensure computational regularity without compromising accuracy, \proj partitions queries and keys into regular buckets based on their LSH codes, and computes only blockwise attention weights. To address the issue of the misalignment of query-key buckets (Sec.~\ref{sec:recalibrate}),  \proj proposes to integrate point coordinates as extra AND LSH codes. The pipeline of \proj is shown in Fig.~\ref{fig:arch}.

To validate the effectiveness of \proj, we evaluate it on two critical computationally intensive HEP tasks: charged particle tracking~\cite{amrouche2020tracking} and pileup mitigation~\cite{martinez2019pileup}, with their significance elaborated in Appendix~\ref{app:tasks}. \proj is benchmarked against five GNNs and seven efficient transformers adapted from both NLP and CV domains under a unified framework on three datasets (with one of them contributed by us). \proj significantly outperforms all baselines, achieving state-of-the-art (SOTA) accuracy with up to $203\times$ speedup on GPUs. Our experiments also show that existing RFF-based methods fail to deliver competitive performance. LSH-based baselines achieve acceptable accuracy for medium-sized point clouds, however, they struggle to scale to larger datasets with tens of thousands of points, and only \proj can process them efficiently and achieve SOTA prediction accuracy.

\section{Preliminaries}\label{sec:prelim}
\textbf{Geometric Deep Learning Tasks.} We focus on tasks with each sample represented as a point cloud $\mathcal{C} = (\mathcal{V}, \bm{X}, \bm{\rho})$, where $\mathcal{V} = \{u_1, \cdots, u_n\}$ is a set of $n$ points, $\bm{X} \in \mathbb{R}^{n \times k_1}$ includes $k_1$-dimensional features for each point, and $\bm{\rho} \in \mathbb{R}^{n \times k_2}$ specifies the coordinates of these points in a $k_2$-dimensional space. We consider GDL tasks that require learning meaningful $H$-dimensional latent representations for each point via a neural network $f: \mathcal{C} \rightarrow \mathbb{R}^{n\times H}$. Depending on the specific task, these representations are either used for direct point-wise label prediction or point-pair-wise relationship analysis, or whole point cloud prediction via aggregating (e.g., averaging) these representations.

\textbf{Random Fourier Features.} Consider any positive definite shift-invariant kernel $k(\bm{x}, \bm{y}) = k(\bm{x} - \bm{y})$ with $\bm{x}, \bm{y} \in \mathbb{R}^d$ that is properly normalized, i.e., $k(\bm{0}) = 1$. Bochner's Theorem~\cite{rudin1991fourier} guarantees that its Fourier transform $k^*(\bm{w})$ is a probability distribution.
Thus, ~\citet{rahimi2007random} propose RFFs to approximate such a kernel by $k(\bm{x}, \bm{y}) \approx \psi(\bm{x})^\top \psi(\bm{y})$ with $\psi: \mathbb{R}^{d} \rightarrow \mathbb{R}^D$, where
%\begin{align*}
    {\small$\psi(\bm{x}) =  \sqrt{\frac{2}{{D}}}\Big(\sin (\bm{w}_1^{\top} \bm{x}),  \cos (\bm{w}_1^{\top} \bm{x}), \ldots, \sin (\bm{w}_{D/2}^{\top} \bm{x}), \cos (\bm{w}_{D/2}^{\top} \bm{x})\Big)^\top$}
    $, \bm{w}_i \stackrel{i i d}{\sim} k^*(\bm{w}).$
%\end{align*}

\textbf{Locality-Sensitive Hashing.} LSH~\cite{indyk1998approximate} was proposed for efficient nearest-neighbor search. With high probability, it hashes close data points into the same bucket and distant ones into different buckets. E$^2$LSH~\cite{datar2004locality} is an LSH variant for Euclidean distances with a hash family $\mathcal{H}$ and hash functions $h_{\bm{a}, b}(\bm{x}) \in \mathcal{H}$, where, for a point $\bm{x}\in \mathbb{R}^d$, $h_{\bm{a}, b}(\bm{x}) = \lfloor \frac{\bm{a}\cdot \bm{x} + b}{r} \rfloor$, $\bm{a} \sim \mathcal{N}(0, \bm{I})$, $b \sim \mathcal{U}(0, r)$, and $r > 0$ is a hyperparameter to control bucket sizes.
There are also variants for angular distances~\cite{andoni2015practical} and inner products~\cite{shrivastava2014asymmetric}.
To amplify LSH's performance, AND LSH, OR LSH, or a hybrid of both can be utilized.
AND LSH concatenates multiple (say $m_2$) hash functions $h_j\in \mathcal{H}$ to form a new hash family $\mathcal{G}$,
where for $g \in \mathcal{G}$, $g(\bm{x}) = [h_1(\bm{x}), \ldots, h_{m_2}(\bm{x})]$, and two points are deemed neighbors if they match across all $m_2$ hash functions in $g$.
OR LSH, on the other hand, forms multiple (say $m_1$) hash tables from $\mathcal{G}$, i.e., $g_1(\bm{x}), \ldots, g_{m_1}(\bm{x})$ with each $g_i(\bm{x}) = [h_{i,1}(\bm{x}), \ldots, h_{i,m_2}(\bm{x})]$, and two points are neighbors if they match in any one of these $m_1$ tables.
When $m_2 = 1$ (one hash function per table), it becomes OR-only LSH, and when $m_2 \geq 2$, it is a hybrid of OR \& AND LSH.

\textbf{Efficient Transformers as Kernel Approximation.}
The quadratic complexity of the original transformer~\cite{vaswani2017attention} comes from the computation of self-attention. That is, with $\bm{Q}, \bm{K}, \bm{V} \in \mathbb{R}^{n \times d}$, where each token or point $u$ in the point cloud is associated with a row $\bm{q}_u, \bm{k}_u, \bm{v}_u$ in these matrices, and
$\operatorname{Attn}(\bm{Q}, \bm{K}, \bm{V}) =  \exp( {\bm{Q}\bm{K}^\top} ) \bm{V}$. Here, $\exp( {\bm{Q}\bm{K}^\top})$ is of size $n\times n$, and we omit the normalization terms for simplicity. Viewing the attention as a kernel $\exp({\bm{x}^\top \bm{y}} )$, several methods have been proposed to approximate it for efficiency. Many of these methods are RFF-based~\cite{peng2021random, choromanski2020rethinking, luo2021stable, choromanski2023learning} or LSH-based~\cite{kitaev2020reformer, daras2020smyrf, zandieh2023kdeformer, han2023hyperattention}.
For example, RFFs can be utilized to approximate $\exp({\bm{x}^\top \bm{y}} ) \approx  \widehat{\psi}( {\bm{x}})^\top \widehat{\psi}( {\bm{y}}) $, with, e.g., $\widehat{\psi}(\bm{x}) = \exp(\frac{\|\bm{x}\|^2}{2}) \psi(\bm{x})$~\cite{peng2021random}, reducing the complexity to $\mathcal{O}(n)$. 
As for LSH-based methods, e.g., Reformer~\cite{kitaev2020reformer} equalizes query and key vectors and sets their norms to be $1$, enabling the use of angular distance-based LSH~\cite{andoni2015practical} to efficiently find large entries in the attention matrix $\exp( {\bm{Q}\bm{K}^\top} )$ as its approximation, resulting in $\mathcal{O}(n \log n)$ complexity.

\textbf{Notation.} Later, we use $\tilde{\mathcal{O}}$, $\tilde{\Theta}$, and $\tilde{o}$ denote soft-$\mathcal{O}$, soft-$\Theta$, and soft-$o$ , respectively. They are variants of Big-$\mathcal{O}$, Big-$\Theta$, and Little-$o$ that suppress polylogarithmic factors.

\section{Error-Computation Analysis for RFF/LSH}\label{sec:analysis}
One of the key steps of designing efficient transformers relies on effective kernel approximation. So, this section aims to analyze the tradeoff between the approximation error ($\epsilon$) and computational complexity ($F$) of both RFF- and LSH-based methods in point cloud systems.
Our goal is to enable direct comparisons between RFF-based and LSH-based methods for GDL tasks, where local inductive bias holds, 
seeking to provide theoretical guidance for the design of efficient transformers to be discussed in Sec.~\ref{sec:arch}.
To summarize, we achieve the following insights: Let $\epsilon$ denote the squared error of the attention weight approximation averaged over all point pairs in a system and $F$ denote the total number of floating point operations (FLOPs).

\begin{enumerate}[noitemsep, topsep=0pt, parsep=0pt, partopsep=0pt]
    \item RFF results in an error $\epsilon = \tilde{\Theta}(\frac{n}{F})$, which 
    is consistently worse than LSH under subquadratic complexity, i.e., when $F=\tilde{o}(n^2)$.
    \item LSH is better suited for tasks with local inductive bias, yielding $\epsilon = \tilde{\Theta}(\frac{1}{n})$ via OR-only LSH. However, OR-only LSH finds it hard to further reduce such error if $F$ is set to be almost linear, i.e., $F=\tilde{\mathcal{O}}(n)$.
    \item Utilizing both OR \& AND LSH significantly improves performance. The error $\epsilon = \tilde{\mathcal{O}}(\exp(-\frac{F}{n\text{polylog}(n)})\frac{1}{n})$, which means that $\epsilon$ can be further exponentially reduced by almost linear complexity $F=\tilde{\mathcal{O}}(n)$.
\end{enumerate}

Practitioners primarily interested in the architecture implementation of \proj may choose to skip the rest of this section, and check Sec.~\ref{sec:arch} directly.

\subsection{Characterizing Local Inductive Bias}\label{sec:LIB}
The following notions aim to formally characterize the local inductive bias of a point cloud system of interest.

\begin{restatable}[Bounded-Support Kernels]{definition}{defKernel}
\label{def:kernel}
Consider a properly normalized shift-invariant kernel defined as $k_s(\bm{x}, \bm{y}) = k_s(\bm{x} - \bm{y})$, where $k_s(\bm{x}, \bm{y}) \in [0, 1]$, $s>0$ and $k_s(\bm{0}) = 1$. This kernel exhibits bounded support, i.e., $k_s(\bm{x} - \bm{y}) = 0$ for $\|\bm{x} - \bm{y}\|_2 > s$. For any $\bm{x}, \bm{y} \in \mathbb{R}^d$, the computational complexity of $k_s(\bm{x}, \bm{y})$ is linear in $d$.
\end{restatable}

\begin{restatable}[Local Inductive Bias]{assumption}{defSparsity}
\label{def:sparsity}
Consider a bounded point cloud system $\mathcal{C}$ with $n$ points located at $\{\bm{x}_1,...,\bm{x}_n\}$  in a $d$-dim unit ball, i.e., $\bm{x}_i\in\mathbb{R}^d$ and $\|\bm{x}_i\|_2\leq 1$. Denote the empirical distribution of the point-pair distances as $\phi(z) = \frac{1}{n(n-1)}\sum_{i,j\in [n], i\neq j} \delta_{\|\bm{x}_i - \bm{x}_j\|_2}(z)$ where $\delta_a(\cdot)$ is 1-dim dirac delta function. $\mathcal{C}$ is said to hold local inductive bias if the ground-truth function for the learning task over $\mathcal{C}$ can be approximated by a transformer with full attention matrices whose attention weights can be represented as a bounded-support kernel $k_s$ between point locations $\bm{x}_i$'s, where the bound $s$ satisfies $\int_{0}^{s} \phi(z)dz \sim \tilde{\mathcal{O}}(\frac{1}{n})$.
\end{restatable}

Intuitively, local inductive bias assumes that in a point cloud system, a point primarily interacts with its local neighborhood, where the number of points each point interacts with is on average at most $\mathcal{O}(\text{polylog}(n))$. This assumption means that the optimal full attention matrix has at most $\mathcal{O}(n\cdot\text{polylog}(n))$ non-zeros, which gives the foundation to build efficient transformers with almost linear complexity. 
The challenge lies in how to identify those non-zeros using near-linear complexity and regular operations.

Note that the above-assumed kernel $k_s$ for characterizing local inductive bias can be viewed as an inherent property of the point cloud system and the learning task, which may not necessarily follow the common implementation of attention kernel such as   $\exp({F(\bm{x})^\top G(\bm{y})})$ with some parameterized functions $F,G$.
Although the conventional kernel $\exp({\bm{x}^\top \bm{y}})$ is not strictly with bounded support, with the functions $F,G$, practical attention weights $\exp({F(\bm{x})^\top G(\bm{y})})$ still hold the potential of approximating a bounded support kernel and yield reasonable performance. That having been said, as shown in our experiments, an attention kernel that directly models local inductive bias (see Sec.~\ref{sec:kernel-for-LIB}) often yields better performance for the tasks where local inductive bias indeed exists. 

\textbf{How large could $s$ be in practice?}  Suppose points are almost uniformly allocated in the $d$-dim unit ball, and then, $\int_{0}^s \phi(z) dz = \Theta(s^d)$. In this case, local inductive bias means the point pairs within $s=\tilde{\mathcal{O}}(\frac{1}{n^{1/d}})$ distance hold positive attention weights.  

\subsection{Error-Computation Tradeoff}
\textbf{RFF.} We instantiate our analysis of RFF based on a widely used feature map $\psi(\bm{x})^\top\psi(\bm{y})$ as defined in Sec.~\ref{sec:prelim}, where the complexity $F$ is proportional to the feature dimension $D$. The following theorem indicates that RFF can hardly reduce the error to $\frac{1}{n}$ when $F$ is sub-quadratic in $n$.
\begin{restatable}[$\epsilon-F$ Tradeoff of RFF]{theorem}{theoremRFF}
\label{theorem:RFF}
    Assume $k_s(\bm{x}, \bm{y})$ is positive definite. If approximating it by RFF $\psi(\bm{x})^\top\psi(\bm{y})$ in point cloud systems described in Assumption~\ref{def:sparsity},
    the error $\epsilon=\Theta{(\frac{n d}{F})}$.
\end{restatable}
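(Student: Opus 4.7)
The plan is to compute the per-pair variance of the RFF estimator, average it over all point pairs under Assumption~\ref{def:sparsity}, and convert the resulting dependence on the feature dimension $D$ into one on the FLOP count $F$. Using the identity $\sin a\sin b+\cos a\cos b=\cos(a-b)$, I would rewrite $\psi(\bm{x})^\top\psi(\bm{y}) = \frac{2}{D}\sum_{i=1}^{D/2}\cos(\bm{w}_i^\top(\bm{x}-\bm{y}))$ as an average of $D/2$ independent samples with $\bm{w}_i\sim k_s^*$. Bochner's theorem gives $\mathbb{E}[\cos(\bm{w}^\top(\bm{x}-\bm{y}))]=k_s(\bm{x}-\bm{y})$, so the estimator is unbiased and its mean-squared error equals its variance; a half-angle identity then yields the per-pair variance
\[
\sigma^2(\bm{x},\bm{y}) \;=\; \frac{1}{D}\bigl(1+k_s(2(\bm{x}-\bm{y}))-2\,k_s(\bm{x}-\bm{y})^2\bigr).
\]

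Next, I would average $\sigma^2(\bm{x}_i,\bm{x}_j)$ over all $n(n-1)$ ordered pairs. Assumption~\ref{def:sparsity} places only $\tilde{\mathcal{O}}(n)$ pairs inside the support of $k_s$ and $\Theta(n^2)$ pairs outside it. On outside pairs, $k_s(\bm{x}_i-\bm{x}_j)=0$ and $k_s(2(\bm{x}_i-\bm{x}_j))=0$, so the variance is exactly $1/D$; on inside pairs the bracketed term lies in $[0,2]$, giving variance at most $2/D$. The population average is therefore $\epsilon = \Theta(1/D)$, where the matching lower bound comes from the dominant $\Theta(n^2)$ far pairs each contributing exactly $1/D$.

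Finally, I would translate $D$ into FLOPs. Computing $\psi(\bm{x}_i)$ at each of the $n$ points requires multiplying a $D\times d$ projection matrix by a $d$-vector and applying elementwise trigonometric functions, which dominates the subsequent linearized-attention products and gives $F=\Theta(nDd)$. Solving $D=\Theta(F/(nd))$ and substituting into $\epsilon=\Theta(1/D)$ yields $\epsilon=\Theta(nd/F)$, as claimed.

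The main obstacle I anticipate is the matching lower bound: the $\mathcal{O}$ direction drops out of the variance formula alone, but the $\Theta$ statement needs the variance to be bounded \emph{below} on a dominant fraction of pairs. Assumption~\ref{def:sparsity} supplies exactly this, because local inductive bias restricts the support of $k_s$ to a $\tilde{\mathcal{O}}(1/n)$-mass region, leaving a $\Theta(n^2)$-sized noise floor that no choice of frequency distribution can suppress below $1/D$. A secondary, routine check is verifying that the $k_s(2(\bm{x}-\bm{y}))$ term does not accidentally cancel the leading $1$ at close pairs, but since $k_s\in[0,1]$ the bracket is always nonnegative and equals $1$ precisely on the far pairs that drive the average.
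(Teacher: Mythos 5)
Your proposal is correct and follows essentially the same route as the paper's proof: the per-pair MSE formula $\frac{1}{D}\bigl(1+k_s(2(\bm{x}-\bm{y}))-2k_s(\bm{x}-\bm{y})^2\bigr)$ (which the paper simply cites rather than rederiving via the cosine identity), averaging over pairs using Assumption~\ref{def:sparsity} to get $\epsilon=\Theta(1/D)$, and the FLOP count $F=\Theta(nDd)$ giving $\epsilon=\Theta(nd/F)$. Your explicit remark that the $\Theta(n^2)$ far pairs each contribute exactly $1/D$ makes the matching lower bound more transparent than the paper's one-line justification; the only tiny quibble is that nonnegativity of the bracket on near pairs follows from its being a variance (equivalently, from positive definiteness of $k_s$), not merely from $k_s\in[0,1]$.
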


\textbf{OR-only LSH.} Since many previous works use OR-only LSH, we are to first analyze the approximation error in such a setting. Note that $F$ is proportional to the number of hash tables $m_1$ in this setting and the latter OR \& AND LSH setting. We base our analysis on E$^2$LSH with $r$ as the bucket size defined in Sec.~\ref{sec:prelim}, while the analysis can be similarly extended to other types of hash functions.
To achieve the next theorem,  we need a further assumption that is satisfied as long as the point allocation $\phi(z)$ is not concentrated at $z=a$ for some particular $a\in [0,2)$.  
\begin{restatable}[$\epsilon-F$ Tradeoff of OR-only E$^2$LSH]{theorem}{theoremLSH}
\label{theorem:LSH}
     Assume there exists $r$ such that $\int_{0}^r \phi(z) dz \leq c_1 r$ and $\int_{r}^\infty \frac{1}{z} \phi(z) dz \leq c_2$ for some positive constants $c_1$ and $c_2$. 
    The OR-only E$^2$LSH may achieve $\epsilon = \tilde{\Theta}(\exp \left(-\frac{c_3 F}{d n^2 s}\right) \frac{1}{n})$ where $c_3$ is a positive constant depending on $c_1$ and $c_2$. 
\end{restatable}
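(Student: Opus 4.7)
The plan is to split the analysis into error and cost, and then pick the bucket size $r$ to balance them. I would consider the natural OR-only E$^2$LSH estimator: with $m_1$ independent tables, each using a single hash function of bucket size $r$, set $\widehat{k}_{ij} = k_s(z_{ij})$ if points $i,j$ collide in at least one table and $0$ otherwise, where $z_{ij} = \|\bm x_i - \bm x_j\|_2$. Because the $m_1$ collision events are independent Bernoullis with parameter $p(z_{ij})$, the per-table E$^2$LSH collision probability at distance $z_{ij}$, the expected per-pair squared error is $k_s(z_{ij})^2(1-p(z_{ij}))^{m_1}$. Averaging over pairs and using $k_s \in [0,1]$ with support in $[0,s]$ rewrites the mean error as
$$\mathbb{E}[\epsilon] \;\le\; \int_{0}^{s}(1-p(z))^{m_1}\,\phi(z)\,dz.$$

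Next I would control the integrand on $[0,s]$. The closed form for $p(c)$ is monotonically decreasing in $c$, equals $1$ at $c=0$, and satisfies $p(c)\ge 1-\Theta(c/r)$ for $c\lesssim r$. Choosing $r = \Theta(s)$ (consistent with $\int_0^s\phi\le c_1 s$) gives a constant $\alpha\in(0,1)$ with $1-p(z)\le\alpha$ for all $z\in[0,s]$, so combined with $\int_0^s\phi(z)\,dz = \tilde{\mathcal{O}}(1/n)$ from Assumption~\ref{def:sparsity},
$$\mathbb{E}[\epsilon] \;\le\; \alpha^{m_1}\cdot\tilde{\mathcal{O}}(1/n).$$
To turn $m_1$ into $F$, I would count FLOPs as hashing ($O(ndm_1)$) plus attention for all same-bucket pairs. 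The expected number of collisions per table is $\binom{n}{2}\int p(z)\phi(z)\,dz$; splitting at $r$ and bounding $p\le 1$ on $[0,r]$ and $p(z)\le O(r/z)$ on $(r,\infty)$, the two integral hypotheses give $\int p(z)\phi(z)\,dz\le (c_1 + C c_2)r$. Hence per-table work is $\tilde{\mathcal{O}}(n^2 r d)$ and total FLOPs are $F = \tilde{\Theta}(n^2 s d\, m_1)$ under $r=\Theta(s)$. Inverting yields $m_1 = \tilde{\Theta}(F/(n^2 s d))$ and substituting back gives
$$\epsilon \;=\; \tilde{\mathcal{O}}\!\left(\exp\!\left(-\frac{c_3 F}{n^2 s d}\right)\frac{1}{n}\right), \qquad c_3 = -\ln\alpha > 0.$$

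The main obstacle lies in the matching lower bound implicit in the $\tilde{\Theta}$ claim: the upper bound is tight only if $r \asymp s$ is essentially optimal among all valid choices. To close this, I would treat $r$ as a free parameter and argue that (i) picking $r \ll s$ drives $1-p(z)$ uniformly toward $1$ on $[0,s]$, so the exponent in $\alpha^{m_1}$ collapses faster than the hashing budget can compensate, and (ii) picking $r \gg s$ inflates the expected collisions per bucket at rate $\Theta(r)$ via the second integral hypothesis, shrinking the achievable $m_1$ and hence the exponent. The two assumed bounds $\int_0^r\phi\le c_1 r$ and $\int_r^\infty\phi(z)/z\,dz\le c_2$ are exactly the conditions that keep both integrals finite on either side of $r\asymp s$, pinning the optimum and delivering the tight $\tilde{\Theta}$ scaling.
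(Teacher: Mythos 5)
Your upper-bound argument is essentially the paper's: the same per-pair error $k_s(z)^2(1-p_r(z))^{m_1}$, the same collision-probability bounds ($1-p_r(z)\le \sqrt{2/\pi}\,z/r$ for $z<r$ and $p_r(z)\le r/(\sqrt{2\pi}z)$ for $z\ge r$), and the same FLOP accounting $F=\Theta\bigl(m_1 d n^2\,\mathbb{E}_{z\sim\phi}[p_r(z)]+m_1 nd\bigr)$ with the two integral hypotheses giving $\mathbb{E}[p_r(z)]\lesssim r$. The only cosmetic difference is that you fix $r=\Theta(s)$ up front, while the paper keeps $r\le s$ free and observes that the product $m_1 r$ appears in both the error exponent and the cost, which yields the same $\epsilon=\tilde{\mathcal{O}}(\exp(-c_3F/(dn^2s))/n)$.

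The genuine gap is in the lower bound, i.e., the $\Omega$ half of the $\tilde{\Theta}$. What you propose — arguing that $r\asymp s$ is the optimal bucket size because $r\ll s$ kills the exponent and $r\gg s$ inflates the cost — is an optimality argument over the tuning parameter $r$, not a lower bound on $\epsilon$. Even granting that $r\asymp s$ is optimal, you still have not shown that at that $r$ the error is at least $\exp(-cm_1)\cdot\tilde{\Theta}(1/n)$; for that you need a \emph{lower} bound on the miss probability $(1-p_r(z))^{m_1}$ integrated against $\phi(z)k_s^2(z)$, which in turn requires an \emph{upper} bound on $p_r(z)$ (the reverse direction of the inequality you use for the upper bound) together with an instance in which $\phi\cdot k_s^2$ actually places $\tilde{\Theta}(1/n)$ mass at distances $z$ where $1-p_r(z)$ is bounded away from zero. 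The paper resolves this by exhibiting a concrete system — points uniform in the unit ball, so $\phi(z)\propto z^{d-1}$ and $s\sim n^{-1/d}$, with $k_s(z)=1$ on $[0,s]$ — and computing
\begin{equation*}
\epsilon \;\ge\; \int_0^r\Bigl(\tfrac{1}{\sqrt{2\pi}}\tfrac{z}{r}\Bigr)^{m_1} z^{d-1}\,dz \;+\; \int_r^s\Bigl(1-\tfrac{1}{\sqrt{2\pi}}\tfrac{r}{z}\Bigr)^{m_1} z^{d-1}\,dz \;=\;\Theta\bigl(\exp(-c_5 m_1)\,s^d\bigr),
\end{equation*}
which, combined with $F=\Omega(dm_1n^2r)$, gives the matching $\Omega(\exp(-c_5F/(dn^2r))/n)$. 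The "may achieve" phrasing signals exactly this structure: the upper bound holds for all systems satisfying the hypotheses, while the lower bound is witnessed by a specific one. To complete your proof you would need to supply such a construction and the corresponding lower bound on the error integral; the heuristic about the two integral hypotheses "pinning the optimum" does not substitute for it.
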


Putting Theorem~\ref{theorem:RFF} and~\ref{theorem:LSH} together, 
clearly, OR-only LSH can outperform RFF when $F= \tilde{o}(n^2)$, indicating that LSH is always preferable for subquadratic complexity given point cloud systems with local inductive bias. This is attributed to the fact that LSH tends to zero out kernel values with a high probability for distant pairs.

\textbf{OR \& AND LSH.} OR-only LSH's error dependency on $\exp(-\frac{c_3F}{dn^2s})$ shows that to further effectively reduce the error $ \tilde{\Theta} (\frac{1}{n})$, $F$ has to be in the order of  $ d n^2s$. However, as $s$ could be much larger than $n^{-1}$ in practice (see the discussion in Sec.~\ref{sec:LIB}), this asks for $F$ being super-linear in $n$. 
The issue, due to our analysis, is caused by many distant point pairs being mapped to the same hash bucket if one uses OR-only LSH, which motivates us to inspect the use of OR \& AND LSH. 

\begin{restatable}[$\epsilon-F$ Tradeoff of OR \& AND E$^2$LSH]{theorem}{theoremANDLSH}
\label{theorem:ANDLSH}
    Suppose each hash table contains $m$ hash functions.
    Assume there exists $m$ such that $\int_{0}^{r}\phi(z)dz = \tilde{\mathcal{O}}(\frac{1}{n})$ and $\int_{r}^{\infty}\phi(z)\frac{r^{m}}{z^{m}}dz\leq \int_{0}^{r}(\sqrt{2\pi} - \frac{z}{r})^{m}\phi(z)dz$, where $r=ms$. By choosing such $r$ as the bucket size, the OR \& AND E$^2$LSH may achieve 
    %\begin{gather*}
        $\epsilon = \tilde{\mathcal{O}}(\exp (- \frac{c_4F}{dn( \operatorname{polylog}(n) + m)})\frac{1}{n}).$ %for a positive constant $c_4$.
    %\end{gather*}
\end{restatable}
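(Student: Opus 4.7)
The plan is to generalize the OR-only E$^2$LSH analysis of Theorem~\ref{theorem:LSH} by tracking how the AND-construction with $m$ concatenated hash functions reshapes both the per-table collision probability and the total FLOP count. The key idea is that with bucket size $r=ms$, AND LSH turns the collision probability into a near-indicator of the radius-$r$ ball, which strictly contains the support radius $s$ of the bounded-support kernel $k_s$; this simultaneously keeps close pairs colliding with constant per-table probability and sharply suppresses collisions between far pairs.

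First, I would write down the single-hash E$^2$LSH collision probability $p_r(z)$ at point-pair distance $z$, using the standard computation over $\bm{a} \sim \mathcal{N}(0,\bm{I})$ and $b \sim \mathcal{U}(0,r)$. In the regime $z \leq s = r/m$, a Taylor expansion gives $p_r(z) \geq 1 - \Theta(z/r)$, so $p_r(z)^m \geq (1 - \Theta(1/m))^m = \Omega(1)$. In the regime $z \gg r$, $p_r(z) = \tilde{\mathcal{O}}(r/z)$, so the AND-collision probability $p_r(z)^m = \tilde{\mathcal{O}}((r/z)^m)$ decays polynomially of degree $m$. The second assumption of the theorem is precisely the requirement that the aggregate far-pair mass $\int_r^\infty (r/z)^m \phi(z)\,dz$ is dominated by the near-pair mass $\int_0^r (\sqrt{2\pi}-z/r)^m \phi(z)\,dz$, where $(\sqrt{2\pi}-z/r)^m$ serves as a convenient upper envelope for $p_r(z)^m$ on $[0,r]$ derived from the E$^2$LSH formula.

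Next, I would decompose the FLOP count $F$ into the hashing cost $\tilde{\Theta}(n\,m_1\,d\,m)$ and the kernel-evaluation cost, which is proportional to $d\,m_1$ times the expected candidate-pair count per table, namely $n^2 \int p_r(z)^m \phi(z)\,dz$. Splitting this integral at $z=r$, the far-pair piece is controlled by the second assumption, and the near-pair piece is bounded via the first assumption $\int_0^r \phi(z)\,dz = \tilde{\mathcal{O}}(1/n)$. Together these yield $F = \tilde{\mathcal{O}}(m_1\,d\,n\,(m + \operatorname{polylog}(n)))$, hence $m_1 = \tilde{\Omega}(F/(d\,n\,(\operatorname{polylog}(n)+m)))$.

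Finally, I would bound the squared error. Because $k_s$ vanishes beyond $s$, the approximation error arises only from close pairs that are missed by all $m_1$ hash tables. Each such pair is missed with probability $(1-p_r(z)^m)^{m_1} \leq \exp(-\Omega(m_1))$ since $p_r(z)^m = \Omega(1)$ on $[0,s]$. Averaging $k_s^2 \leq 1$ over all $n(n-1)$ pair entries and using Assumption~\ref{def:sparsity} to count the number of close pairs as $\tilde{\mathcal{O}}(n)$ yields $\epsilon = \tilde{\mathcal{O}}(\exp(-\Omega(m_1))\cdot \tfrac{1}{n})$. Plugging in the lower bound on $m_1$ gives the claimed $\epsilon = \tilde{\mathcal{O}}(\exp(-c_4 F/(d\,n(\operatorname{polylog}(n)+m)))\cdot \tfrac{1}{n})$. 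The main obstacle will be the careful bookkeeping in the FLOP decomposition: ensuring that the choice $r=ms$ simultaneously achieves constant per-table recall on $[0,s]$ and the near-$\tilde{\mathcal{O}}(1/n)$ candidate rate per table, rather than trading one away to get the other, which is where the $\sqrt{2\pi}$-envelope assumption must be invoked delicately.
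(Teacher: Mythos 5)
Your proposal is correct and follows essentially the same route as the paper: it uses the same collision-probability bounds for E$^2$LSH with $r=ms$ (constant per-table recall on $[0,s]$ via $(1-\Theta(1/m))^m=\Omega(1)$, polynomial decay $(r/z)^m$ for far pairs), the same FLOP decomposition into hashing cost plus candidate-pair cost split at $z=r$ using the two assumptions, and the same final step of bounding the miss probability by $\exp(-\Omega(m_1))$ and substituting $m_1=\tilde{\Omega}(F/(dn(\operatorname{polylog}(n)+m)))$. No substantive differences from the paper's argument.
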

Note that if we consider systems with almost uniformly allocated points, there exists $m \leq d$ satisfying the assumption. This theorem shows that with OR \& AND LSH combined, $F\sim nd(\operatorname{polylog}(n)+d)$ is sufficient to reduce the error exponentially, necessitating the use of OR \& AND LSH.

\subsection{Numerical Experiments}\label{sec:numerical}

\begin{figure}[t]
\begin{center}
\centerline{\includegraphics[trim={0.0cm 0.4cm 0.0cm 0.0cm},clip,width=0.625\columnwidth]{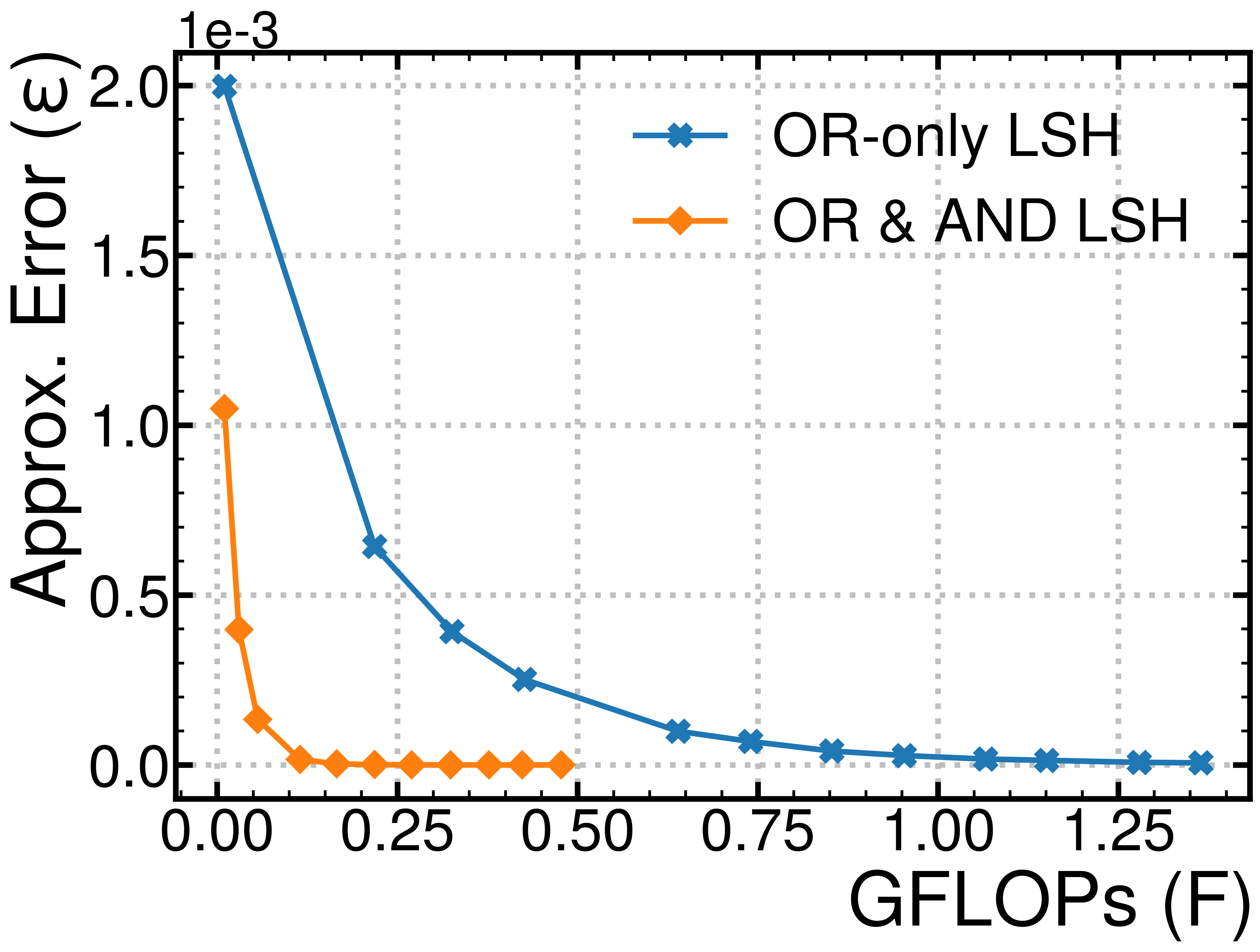}}
\vspace{-0.25cm}
\caption{
The error-computation tradeoff from numerical experiments. OR \& AND LSH decreases the error exponentially with near-linear complexity, validating our analysis.
}
\label{fig:numerical}
\vspace{-0.9cm}
\end{center}
\end{figure}

To further validate the effectiveness of OR \& AND LSH as proved in our theoretical analysis, we conducted additional numerical experiments, and the results are depicted in Fig.~\ref{fig:numerical}.

In this numerical study, we generate $n=30,000$ points uniformly distributed across a 2D square with a side length of $10$. To model local inductive bias, each point interacts only with its $64$ nearest neighbors, approximating a ground-truth kernel value of $\exp \left(-\frac{1}{2}\|\bm{x}-\bm{y}\|^2\right)$ (our theoretical results are not limited to this kernel). Points beyond this neighborhood have a kernel value of $0$. Additional details are provided in Appendix~\ref{app:numerical}. 
In this study, since $n^2$ is roughly of the same magnitude as $1$ GFLOP ($1e9$ FLOPs), Fig.~\ref{fig:numerical} reveals that OR-only LSH can only effectively reduce the error when the computational budget is on the order of $n^2$. Conversely, OR \& AND LSH achieves exponential error reduction with substantially fewer FLOPs, demonstrating its superior efficiency and accuracy.

% (note that we select this kernel because this is from the well-known RBF kernel and aligns with our proposed attention kernel, but our theoretical results are not limited to this kernel). Then, E2LSH is utilized for approximation. Given a budget of FLOPs, OR-only LSH approximates the kernel values by setting the number of hash functions per table to $1$ and searching the bucket size and number of hash tables to find its optimized approximation error $\epsilon$ in this point cloud system; OR \& AND LSH searches the bucket size and number of hash tables and functions to obtain the optimized error for a given number of FLOPs.

\vspace{-2mm}
\section{\proj Architecture}\label{sec:arch}
Motivated by our theoretical insights, we propose \proj in this section, which is illustrated in Fig~\ref{fig:arch}. We will first introduce the attention kernel considered, and then describe an approach for approximating it with OR \& AND LSH. Lastly, we present a way to ensure computational regularity without compromising approximation accuracy.
\vspace{-2mm}
\subsection{Kernel with Explicit Local Inductive Bias}\label{sec:kernel-for-LIB}
Given a query-key pair $(\bm{{q}}_u, \bm{{k}}_v)$, we propose to
use the following kernel for attention computation:
$
k(\bm{{q}}_u, \bm{{k}}_v ) = \exp (  -\frac{1}{2} \|  \bm{{q}}_u - \bm{{k}}_v   \|^2   )
$, where $\bm{{q}}_u = [\bm{\wt{q}}_u \| \sqrt{2 \omega} \bm{\rho}_u ]$ and $\bm{{k}}_v = [\bm{\wt{k}}_v \| \sqrt{2 \omega} \bm{\rho}_v ]$ are concatenated from the original transformer's queries and keys $\bm{\wt{q}}_u, \bm{\wt{k}}_v \in \mathbb{R}^d$ with point coordinates $\bm{\rho}_u, \bm{\rho}_v \in \mathbb{R}^{k_2}$ and learnable parameters $\omega \in \mathbb{R}^+$.
The full attention mechanism is then $\operatorname{Attn}(\bm{A}, \bm{V}) =  \bm{D}^{-1} \bm{A} \bm{V}$, with $\bm{A} \in \mathbb{R}^{n \times n}$ comprising elements $\bm{A}_{uv} = k(\bm{{q}}_u, \bm{{k}}_v )$, and $\bm{D} = \operatorname{diag}(\bm{A} \bm{1})$ for normalization, where $\bm{1}$ represents an all-one vector. %, and $\operatorname{diag}$ converts its input vector into a diagonal matrix.
This kernel
enables the use of E$^2$LSH (or RFF) for approximation and allows for explicit modeling of local inductive bias: % with both feature and positional information encoded, 
the attention score $k(\bm{{q}}_u, \bm{{k}}_v ) \rightarrow 0$ as $\|  \bm{{q}}_u - \bm{{k}}_v   \|^2$ increases.

Note that \proj can also support efficient computation of the conventional attention kernel $\exp ( \bm{q}_u^T \bm{k}_v)$ by transforming it into $\exp (  -\frac{1}{2} \|  F(\bm{{q}}_u) - G(\bm{{k}}_v)  \|^2)$ for some functions $F,G$~\cite{shrivastava2014asymmetric, daras2020smyrf}. However, this kernel does not work well for the HEP tasks in this work, due to its failure in explicitly modeling local inductive bias.

\begin{figure}[t]
\begin{center}
\centerline{\includegraphics[trim={1.5cm 0.2cm 0.5cm 0.2cm},clip,width=1\columnwidth]{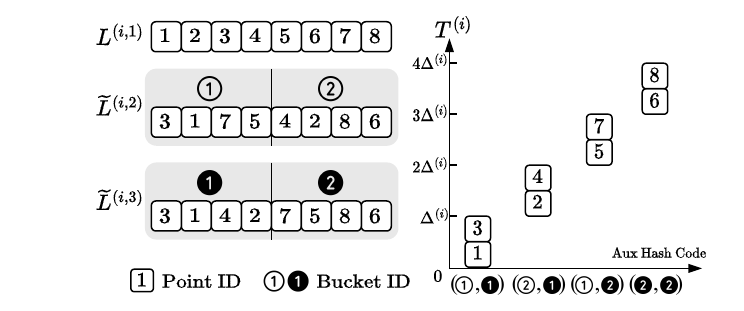}}
\vspace{-0.2cm}
\caption{
The above shows how to obtain AND hash code $T^{(i)}$ with $m_2 = 3, B_{ij} = 2$. 
Points are assumed to be pre-sorted based on their raw hash values with $\min(L^{(i, 1)})=0$.
}
\label{fig:trick}
\vspace{-0.9cm}
\end{center}
\end{figure}

\subsection{OR \& AND LSH for Attention Computation}\label{sec:or-and}
As shown in Sec.~\ref{sec:analysis}, to effectively approximate kernels with local inductive bias, it is critical to utilize OR \& AND LSH for near-linear complexity with guaranteed low approximation errors. Therefore, we propose an architecture that integrates OR \& AND LSH for attention computation.

Specifically, we first construct $m_1$ hash tables (OR LSH), each with $m_2$ hash functions (AND LSH) for each query and key. Each hash function is  E$^2$LSH without bucketization, i.e., $h_{\bm{a}}(\bm{x}) = \bm{a} \cdot \bm{x}$. Consequently, each query $\bm{q}_u$ or key $\bm{k}_v$ yields $m_1 \times m_2$ raw hash values, denoted as ${L}_{\bm{q}_u}^{(ij)}, {L}_{\bm{k}_v}^{(ij)} \in \mathbb{R}$ for $i\in [m_1]$ and $j\in[m_2]$, respectively.
Due to the property of E$^2$LSH, if $\bm{q}_u$ and $\bm{k}_v$ hold small $\|\bm{q}_u-\bm{k}_v\|_2$, they are likely to have close hash values ${L}_{\bm{{q}}_u}^{(ij)}$ and ${L}_{\bm{{k}}_v}^{(ij)}$. % for any $i$ and $j$.

For each query/key, uniformly denoted as $\bm{z}$, in the $i^{th}$ hash table, our goal is to combine the $m_2$ raw hash values ${L}_{\bm{z}}^{(ij)}$ into a single \emph{\textbf{AND hash code}} $T_{\bm{{z}}}^{(i)} \in \mathbb{R}$
such that query/key pairs with close $|T_{\bm{{q}}_u}^{(i)}-T_{\bm{{k}}_v}^{(i)}|$ must have close $|{L}_{\bm{q}_u}^{(ij)}-{L}_{\bm{k}_v}^{(ij)}|$ for all $j \in [m_2]$, i.e., an AND operation. 
Then, attention can be computed using the resulting AND hash codes $T_{\bm{{z}}}^{(i)}$'s from those $m_1$ hash tables. 
The details are as follows. 

\textbf{Obtaining AND Hash Code ${T}_{\bm{z}}^{(i)}$.}
For each query/key $\bm{{z}}$, we keep ${L}_{\bm{{z}}}^{(i1)}$ as it is as a real-valued 1D \emph{\textbf{base hash code}}, 
and bucketize each ${L}_{\bm{{z}}}^{(ij)}$ for $j \geq 2$, which leads to an $(m_2 - 1)$-sized tuple of integer bucket indices, named as \emph{\textbf{aux hash code}}. 
As illustrated in Fig~\ref{fig:trick},
we compute the AND hash code by 1) allocating $\bm{{z}}$'s only with the same aux hash code to a unique range in $\mathbb{R}$, and 2) within each allocated range, positioning different $\bm{z}$'s according to their real-valued base hash codes. Specifically, to obtain the aux hash code, given $i,j$ and the number of desired buckets $B_{ij}$, sort $L_{\bm{{z}}}^{(ij)}$ for all queries and keys (thus $2n$ in total). Then, starting from 1,  every $\lfloor \frac{2n}{B_{ij}} \rfloor$ consecutive points receives a bucket index denoted as $\wt{L}_{\bm{{z}}}^{(ij)}\in \{1,...,B_{ij}\} $. We use $\wt{L}_{\bm{{z}}}^{(ij)}$ as the aux hash code. Then, the AND hash code ${T}_{\bm{z}}^{(i)}$ can be computed as follows: Let $\Delta^{(i)} = \max (L^{(i1)}) - \min (L^{(i1)})$, 
\begin{align*}
    T^{(i)}_{\bm{z}} &= L^{(i1)}_{\bm{z}} + \Delta^{(i)}\sum_{j=2}^{m_2} \left[(\wt{L}^{(ij)}_{\bm{z}}-1) \prod_{j'=2}^{j-1} B_{ij'}\right].
\end{align*}
This way of computation makes sure that points with the same aux hash codes are assigned to adjacent ranges. Note that to avoid the boundary effect of hash bucketing (two close points being put into two consecutive buckets consistently), we may shift the number of desired buckets $B_{ij}$ for different $i,j$ while guaranteeing the total number of buckets $\prod_{j=2}^{m_2} B_{ij}$ unchanged, which effectively shifts the bucket boundaries, similar to the random shifts in original E$^2$LSH.

\textbf{Merging $m_1$ OR LSH Results.}
By repeating the above steps $m_1$ times, we yield $T_{\bm{q}_u}^{(i)}, T_{\bm{k}_v}^{(i)}$ for every query and key with $i \in [m_1]$, which are used to compute $m_1$ many sparse attention matrices $\bm{A}^{(i)}$ with regular non-zero blocks %are obtained according to the $m_1$ AND hash codes 
(this process will be elaborated in the next Sec.~\ref{sec:recalibrate}). The final embedding can be computed as $\bm{E} =  \bm{D}^{-1} \bm{A} \bm{V}\in \mathbb{R}^{n \times d}$, where $\bm{A} \bm{V}$ is computed via the sum of $m_1$ regular block matrix multiplications $\sum_i \bm{A}^{(i)} \bm{V}$, and $\bm{D}$ is computed similarly via  $\text{diag}(\sum_i \bm{A}^{(i)} \bm{1})$.

\begin{figure}[t]
    % \vspace{-2mm}
    \centering
    \begin{subfigure}[t]{0.44\columnwidth} 
        \centering
        \includegraphics[trim={1.1cm 0.85cm 0.0cm 0.1cm}, clip, width=0.7\linewidth]{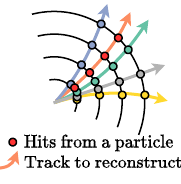}
        \vspace*{-2mm}
        \caption{Charged Particle Tracking}
        \label{fig:tracking}
    \end{subfigure}
    \hspace{0.01\columnwidth}
    \begin{subfigure}[t]{0.44\columnwidth}
        \centering
        \includegraphics[trim={0.15cm 0.0cm 0.3cm 0.3cm}, clip,width=0.66\linewidth]{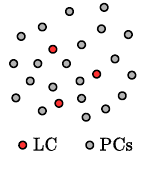}
        \vspace*{-2mm}
        \caption{Pileup Mitigation}
        \label{fig:pp}
    \end{subfigure}
    \vspace*{-2mm}
    \caption{Illustrations of the two HEP tasks.}
    \label{fig:datasets}
    \vspace{-0.3cm}
\end{figure}

\vspace{-1mm}
\subsection{Regular Computation  with Query-Key Alignment}\label{sec:recalibrate}
Here, we elaborate the way to compute sparse attention matrix $\bm{A}^{(i)}$ with regular non-zero blocks via regular operations. 
A naive way to compute $\bm{A}^{(i)}$ is to grouping queries and keys with similar $T_{\bm{q}_u}^{(i)}, T_{\bm{k}_v}^{(i)}$ into buckets and compute their attention. However, this is inefficient because of the potentially non-uniform allocation between queries' and keys' hash codes. The numbers of queries and the numbers of keys may be very different in the same buckets and may shift significantly across the buckets, of which the attention computation needs significantly irregular computations. %where the runtime is often dominated by the largest bucket. 

To introduce regularity, we separately process queries and keys. We partition queries into equal-sized buckets by truncating their AND hash codes $T_{\bm{q}_u}^{(i)}$ for $u\in[n]$, and partition the keys in the same way. Then, we compute the attention between queries and keys with the same bucket index, which essentially computes a block-diagonal attention matrix and is highly regular and hardware-friendly.

However, in practice, we observe that the above way to bucketize queries and keys separately may introduce a misalignment issue between queries and keys and miss those query-key pairs with large attention values, since it may include queries and keys with rather different hash codes in the same bucket, as illustrated with an example in Fig.~\ref{fig:arch}. Reformer~\cite{kitaev2020reformer} circumvents this issue by tying queries and keys, i.e., $\bm{q}_u = \bm{k}_u$, but this limits its modeling capacity. We address this challenge by integrating point coordinates as extra AND hash codes, detailed as follows.

\textbf{Point Coordinates as Extra AND Hash Codes.}
In GDL tasks with point cloud data, we may leverage spatial proximity to align query buckets and key buckets. Specifically, we can obtain $d'$ additional AND hash values based on the coordinates of point $u$, $\bm{\rho}_u \in \mathbb{R}^{k_2}$  (typically $d' \leq k_2$)  for aux hash code computation. These hash values are shared by both queries and keys, i.e., ${L}_{\bm{{q}}_{u}}^{({i(m_2+\ell)})} = {L}_{\bm{{k}}_u}^{({i(m_2+\ell)})}  = h_{\bm{a}_{\ell}}(\bm{\rho}_u) (= \bm{a}_{\ell}\cdot\bm{\rho}_u) \in \mathbb{R}$, 
$\ell=1,2,...,d'$.  
Subsequently, these hash values go through the same procedure to be combined into the AND hash codes as discussed in Sec.~\ref{sec:or-and}. This process is essentially equivalent to partitioning the input space into various distinct, non-overlapping regions randomly, %and the number of such regions is controlled by the hyperparameter $G$. 
and guarantees that even if the query and the key hash codes are processed separately,
only the attention between point pairs $u, v$ that are close in the geometric space is computed, which well addresses the misalignment issue.

\section{Related Work}\label{sec:relatedwork}
In this section, we review the most relevant work on efficient transformers and discuss their existing issues. More related work can be found in Appendix~\ref{app:related}.

\textbf{Neglecting Error-Computation Tradeoff.}
FLT~\cite{choromanski2023learning} models local inductive bias utilizing RFF for GDL tasks with tens of points and overlooks the bad error-computation tradeoff of RFF.
Those works using LSH, Reformer~\cite{kitaev2020reformer} and Smyrf~\cite{daras2020smyrf} consider OR-only LSH and neglect AND LSH, rendering non-neglectable error for large $n$; KDEformer~\cite{zandieh2023kdeformer} and HyperAttention~\cite{han2023hyperattention} employ AND-only LSH, which often does not work well in practice.
Furthermore, to employ angular-distance-based LSH functions, Reformer normalizes their queries and keys, limiting the model's expressiveness. KDEformer and HyperAttention avoid using normalized inputs, but using angular-distance-based LSH for maximum inner product search requires strong assumptions on the alignment between query-key angles and inner products.

\textbf{Compromised Accuracy for Computational Regularity.}
Reformer sorts and truncates hash buckets evenly for computational regularity, 
which does not guarantee that consecutive buckets in a hash table correspond to geometrically neighboring areas. To mitigate this issue, Smyrf, KDEformer, and HyperAttention utilize either E$^2$LSH~\cite{datar2004locality} or hyperplane LSH~\cite{charikar2002similarity}, ensuring that query/key pairs located in adjacent buckets are geometrically close.
However, these methods truncate queries and keys into equal-sized buckets separately, and neglect the query-key misalignment issue, as explained in Sec.~\ref{sec:recalibrate}. Both Flatformer~\cite{liu2023flatformer} and DSVT~\cite{wang2023dsvt} from CV propose to project 3D points onto the $x$ and $y$ axes rather than using LSH functions, and attention is computed by grouping points into equal-sized blocks along each axis. Such fixed projection directions may not be suitable for scientific problems with complex geometry. Moreover, some of these methods rely on domain-specific techniques, such as voxelization~\cite{wang2023dsvt}, which presents challenges in applications to general point-cloud data.

\section{Experiments}
We evaluate \proj for both predictive accuracy and computational performance against a variety of efficient transformers and GNNs on two critical tasks in HEP. 
In the following, we introduce our datasets, baselines, and experiment settings, and
more details can be found in Appendix~\ref{app:tasks} and~\ref{app:impl}.

\subsection{Datasets}
\textbf{Tracking-6k \& Tracking-60k.} We use two datasets derived from the TrackML Particle Tracking Dataset~\cite{amrouche2020tracking} designed for evaluating algorithms that reconstruct charged particle tracks, a crucial task in HEP that requires real-time processing.
During collision events, as charged particles pass through tracking detectors, they leave a trail of hits, each recorded with geometric coordinates and additional properties (e.g., momentum). The hits from a single collision event collectively form an attributed point cloud, as illustrated in Fig.~\ref{fig:tracking}.
The task is to identify which hits are left by the same particle and group them accordingly for track reconstruction. The current pipeline for this task is time-consuming, representing about 45\% of the total collider data reconstruction time~\cite{CMS:2815292}. Thus, accurate and efficient methods for this task are in great demand.
ML methods can be used to learn hit (point) embeddings such that the hits originating from the same particle are nearby in the embedding space for downstream clustering and track identification. 
Differing in scale, Tracking-6k comprises point clouds with about 6,000 points each, while Tracking-60k presents a more challenging scenario with each cloud containing about 60,000 points.

\textbf{Pileup-10k.} This dataset, similar to that in~\citet{martinez2019pileup,li2023semi}, is for the task of pileup mitigation, a critical data-denoising step in HEP. Each point cloud within the dataset represents an event resulting from multiple simultaneous proton-proton collisions at the LHC. The individual points in these clouds correspond to the particles generated from the collisions, either from the leading collision (LC) or simultaneous pileup collisions (PCs). The goal of this task is to classify whether each particle originates from the LC or PCs, as illustrated in Fig.~\ref{fig:pp}, which is a point classification task. There are 1000 point clouds in this dataset, each with about 10,000 points.

\subsection{Baselines and Setup}
\textbf{Efficient Transformer Baselines.} We evaluate eight efficient transformers from both NLP and CV domains as our baselines. These include the LSH-based Reformer~\cite{kitaev2020reformer}, Smyrf~\cite{daras2020smyrf}, and HyperAttn~\cite{han2023hyperattention}; RFF-based Performer~\cite{choromanski2020rethinking} and FLT~\cite{choromanski2023learning}; and ScatterBrain~\cite{chen2021scatterbrain}, which integrates both RFF and LSH approaches. From CV, we include Point Transformer~\cite{zhao2021point} and FlatFormer~\cite{liu2023flatformer}.

\textbf{GNN Baselines.} Besides collecting results from current SOTA GNNs for the two tasks~\cite{lieret2023high, li2023semi}, we use GCN~\cite{kipf2016semi} as a baseline and further benchmark three GNNs that have been widely used in scientific applications,
including GatedGNN~\cite{li2015gated, li2023semi}, DGCNN~\cite{DGCNN,qu2020jet}, and GravNet~\cite{qasim2019learning}.

\textbf{Random Baselines.} The random baselines for the Tracking datasets are implemented by randomly initializing \proj models without any training.
For the Pileup dataset, the random baseline is obtained by randomly assigning the output class probability for each point.

\textbf{Metrics.} 
For the tracking datasets, the quality of the learned point embeddings is assessed by evaluating how closely the embeddings of hits from the same particle cluster together. Specifically, we use the metric $\operatorname{AP@}k = \frac{1}{n} \sum_{u=1}^{n} \operatorname{Prec@}k_u$, where $k_u$ represents the number of hits originating from the same particle as hit $u$. $\operatorname{Prec@}k_u$ calculates the precision by retrieving the closest $k_u$ neighbors of hit $u$ in the embedding space. For the pileup dataset, the area under the precision-recall curve (AUC) is employed for this imbalanced binary classification task.

\textbf{Setup.} 
The transformer baselines are implemented using well-established codebases~\cite{fasttransformers, pytorchreformer, flyrepo} or author-provided code~\cite{liu2023flatformer}, while GNNs use the implementation from PyG~\cite{Fey_Fast_Graph_Representation_2019}.
For the results collected from SOTA GNNs, provided model checkpoints~\cite{lieret2023high} are used for evaluation on the Tracking datasets; for the Pileup dataset, the SOTA GNN is trained from scratch using available open-source code in~\cite{li2023semi}.
If not specified, point coordinates are used as the positional encoding for the transformer baselines following~\citet{liu2023flatformer, wang2023dsvt}.
All models are ensured to have a similar number of trainable parameters
and then the FLOPs used are aligned if possible. 
All models are trained and evaluated with the same seed to ensure reproducibility, using a server with NVIDIA Quadro RTX 6000 GPUs and Intel Xeon Gold 6248R CPUs. Note that all computations were performed on the GPUs, including the construction of k-nn graphs required by some baselines, where the API from PyG~\cite{Fey_Fast_Graph_Representation_2019} was used with k being 64 similar to previous works~\cite{lieret2023high, li2023semi}.

\textbf{Hyperparameter Tuning.}
The hyperparameters for the baselines and \proj are tuned with similar budgets, based on performance in the validation set of each dataset. For \proj, we adopt $m_1 = 3$ hash tables, each with $m_2 = 3$ hash functions for the three datasets. The block size of attention computation is set to $100$, and we use only point coordinates without point hidden representations as the AND hash inputs,
i.e., 
${L}_{\bm{{q}}_{u}}^{({i(1+\ell)})} = {L}_{\bm{{k}}_u}^{({i(1+\ell)})}  = h_{\bm{a}_{\ell}}(\bm{\rho}_u) (= \bm{a}_{\ell}\cdot\bm{\rho}_u)$, for $\ell=1,2$, where note that in HEP, the points are in a 2-d $\eta-\phi$ space~\cite{thais2022graph}, as detailed in Appendix~\ref{app:data}.
We set a fixed total number of buckets $\prod_{j=2}^{m_2} B_{ij}$ and generate different bucket sizes $\{B_{ij}\}$ randomly to mitigate the boundary effect.
See Appendix~\ref{app:impl} for detailed settings.

\begin{table}[t]
\centering
\caption{Predictive performance on the three datasets. The $\textbf{Bold}^\dagger$, $\textbf{Bold}^\ddagger$, and $\textbf{Bold}$ highlight the first, second, and third best results, respectively. $\underline{\text{Underline}}$ indicates the best transformer baselines.}
\vspace{-2mm}
\label{tab:main}
\resizebox{\columnwidth}{!}{%
\begin{tabular}{lccc}
\toprule
 & Tracking-6k (AP$@k$) & Tracking-60k (AP$@k$) & Pileup-10k (AUC) \\
\midrule
Random & 5.88 & 5.71 & 4.22 \\
SOTA GNNs & $\mathbf{91.00}^\ddagger$ & $\mathbf{90.89}^\ddagger$ & $\mathbf{40.26}$ \\
\midrule
Reformer & $72.37$ & $\underline{72.47}$ & $36.70$ \\
SMYRF & $72.98$ & $71.18$ & $25.20$ \\
HyperAttn & $71.49$ &	$70.22$	 & $25.31$ \\
Performer & $73.17$ & $72.07$ & $28.36$ \\
FLT & $72.55$ & $71.45$ & $25.26$ \\
ScatterBrain & $73.35$ & $72.06$ & $30.95$ \\
PointTrans & $72.33$ & $70.81$ & $\underline{40.26}$ \\
FlatFormer & $\underline{74.22}$ & $70.23$  & $38.61$ \\
\midrule
GCN & $79.61$ & $75.38$ & $40.10$ \\
DGCNN & $\mathbf{90.74}$ & $\mathbf{88.66}$  & $33.75$ \\
GravNet & $90.11$ & $87.99$ & $40.10$ \\
GatedGNN & $80.98$ & $78.42$ & $40.26$ \\
\midrule
Performer-$k_{\operatorname{\proj}}$ & $71.97$ & $69.20$ & $32.81$ \\
SMYRF-$k_{\operatorname{\proj}}$ & $83.19$ &	$71.04$ &	$\mathbf{40.31}^\ddagger$  \\
FlatFormer-$k_{\operatorname{\proj}}$ & $88.18$ & $85.06$ &	$39.99$ \\
\midrule
\proj & $\mathbf{92.66}^\dagger$ &	$\mathbf{91.93}^\dagger$ &	$\mathbf{40.39}^\dagger$ \\
\bottomrule
\end{tabular}
}
\vspace{-0.8cm}
\end{table}

\begin{table}[t]
% \vspace{-3mm}
\caption{Training and test time (ms) per sample. Each entry is the median from at least 100 measurements evaluated on an NVIDIA Quatro RTX 6000. Numbers in $(\cdot)$ are the time used to pre-construct input graphs that may be saved during training if pre-processing is allowed. Note that real-time inference requires building graphs on the fly. The $\textbf{Bold}^\dagger$ highlights the best results, and 
$\textbf{Bold}$ and $\underline{\text{Underline}}$ indicate the best transformer and GNN baselines, respectively.}
\label{tab:speed}
\vspace{-2mm}
\centering
\resizebox{\columnwidth}{!}{%
\begin{tabular}{lcccccc}
\toprule
\multicolumn{1}{c}{\multirow{2}{*}{}} & \multicolumn{2}{c}{Tracking-6k}          & \multicolumn{2}{c}{Tracking-60k}       & \multicolumn{2}{c}{Pilup-10k}
\\
\cmidrule(l{5pt}r{5pt}){2-3} \cmidrule(l{5pt}r{5pt}){4-5} \cmidrule(l{5pt}r{5pt}){6-7} 
\multicolumn{1}{c}{}                  
& Train & Test & Train & Test & Train & Test \\
\midrule
SOTA GNNs & $559$ & $221$ & OOM & $5781$ & $432(322)$ & $362$
\\
\midrule
Reformer         & 
                $355$ & $23.1$ & 
                $2570$ & $251$ & 
                $83.3$ & $23.4$
\\
SMYRF  & 
                $348$ & $8.7$ & $2343$ & 
                $69.6$ & $58.6$ & $12.4$
\\
HyperAttn &
$352$ &	$8.4$ & $2320$	&$62.1$&	$44.4$&	$12.5$

\\
Performer   & 
                $343$ & $\mathbf{8.3}$ & $2407$ &
                $68.7$ & $52.7$ & $12.8$
\\
FLT     & 
                $341$ & $8.4$ & $2369$ &
                $71.6$ & $55.9$ & $12.7$
\\
ScatterBrain     & 
                $357$ & $13.1$ & $2562$ &
                $129$ & $109$ & $34.6$
\\
PointTrans     & 
                $476 (130)$ & $144$ & $7361 (5017)$ &
                $5143$ & $372 (323)$ & $348$
\\
FlatFormer     & 
                $\mathbf{338}^{\dagger}$ & $\mathbf{8.3}$ & $\mathbf{2261}^\dagger$ &
                $\mathbf{58.7}$ & $\mathbf{53.7}$ & $\mathbf{12.2}$
\\
\midrule
GCN     & 
                $\underline{471 (129)}$ & $\underline{138}$ & $\underline{7332 (5009)}$ & $\underline{5123}$ & $376 (322)$ & $342$
\\
DGCNN     & 
                $563$ & $287$ & $14098$ &
                $11779$ & $325$ & $294$
\\
GravNet     & 
                $593$ & $251$ & $13597$ &
                $11684$ & $\underline{312}$ & $\underline{278}$
\\
GatedGNN     & 
                $512 (131)$ & $158$ & $7476 (5013)$ &
                $5263$ & $432 (328)$& $362$
\\
\midrule
\proj     & 
                $\mathbf{338}^\dagger$ & $\mathbf{7.0}^{\dagger}$ & $\mathbf{2312}$ &
                $\mathbf{57.9}^{\dagger}$ & $\mathbf{40.3}^{\dagger}$ & $\mathbf{10.7}^{\dagger}$
\\
\bottomrule
\end{tabular}%
}
\vspace{-3mm}
\end{table}

\begin{table}[t]
\centering
\caption{Ablation studies of \proj.}
\vspace{-0.2cm}
\label{tab:ablation}
% \resizebox{\columnwidth}{!}{%
\begin{small}
\begin{tabular}{lc}
\toprule
 &  {Tracking-60k} \\
\midrule
%\proj & $91.76$ \\
\proj w/o $k_{\operatorname{\proj}}$ & $72.28$  \\
\midrule
OR-only LSH & $71.42$ \\
OR-only LSH* & $78.22$  \\
OR \& AND LSH & $70.98$  \\
OR \& AND LSH* & $88.54$  \\
\bottomrule
\end{tabular}
\end{small}
% }
\vspace{-0.5cm}
\end{table}

\begin{figure*}[ht]
    \centering
    % \vspace{-0.15cm}
    \includegraphics[trim={0.0cm 0.0cm 0.0cm 0cm},clip,width=1.0\linewidth]{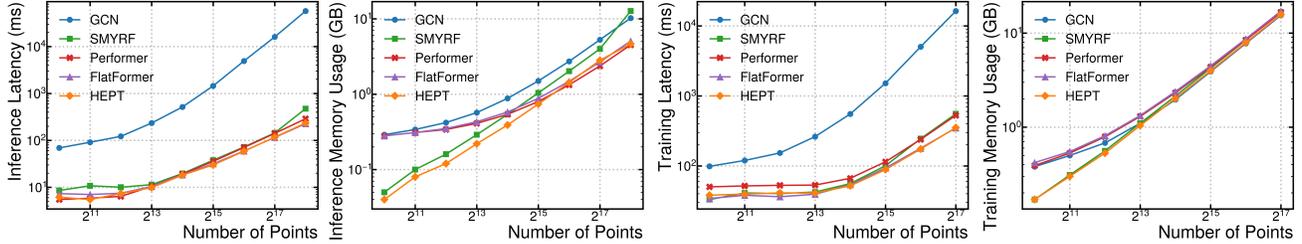}
    \vspace{-0.7cm}
    \caption{Inference and training costs per point cloud.
    }
    \label{fig:scaling}
    \vspace{-0.5cm}
\end{figure*}

\subsection{Result Analysis}
\textbf{Predictive Performance.} As shown in Table~\ref{tab:main}, GNNs are suitable for GDL tasks with local inductive bias and have indeed achieved good prediction accuracy. However, \proj still largely outperforms GNNs
(with much lower computational complexity).
When compared with other transformers, \proj's performance gain is even more significant (up to $22\%$).
To inspect the benefits of our proposed kernel $k_{\operatorname{\proj}}$, we also incorporate it into some transformer baselines when possible. These baselines also yield substantial improvements, validating the necessity of modeling local inductive bias explicitly for GDL tasks in HEP. 
Moreover, we observe that RFF-based methods Performer and FLT consistently exhibit unsatisfactory performance even with $k_{\operatorname{\proj}}$, which aligns with our analysis. As for LSH-based methods, 
SMYRF shows promise with $k_{\operatorname{\proj}}$, but it is unable to well generalize to the larger dataset Tracking-60k due to its OR-only LSH-based design and the neglect of query-key alignment. Similarly, FlatFormer also achieves good results when paired with our kernel. However,
it still falls short of matching the SOTA GNNs and \proj.

\textbf{Computational Complexity.} Table~\ref{tab:speed} compares both the training (forward + backward pass) and inference (forward pass) time per sample for all models yielded from Table~\ref{tab:main}, and their FLOPs and GPU memory usage are reported in Table~\ref{tab:memory} in the appendix.
Clearly, \proj is among the most efficient transformers, and
the gain in computational speed compared with GNNs is tremendous, especially for large point clouds.  
The speedup can be less significant in training for the Tracking datasets since the loss computation in this task dominates the running time in training (see Table~\ref{tab:training_latency_bd}).
Specifically, in Tracking-60k, \proj achieves an $89$-$203\times$ speedup in inference and a $3$-$6\times$ speedup in training, and in Pileup-10k, \proj is $26$-$34\times$ faster for inference and $8$-$11\times$ faster for training, all while maintaining SOTA predictive accuracy.
The slower performance of GNNs is due to the need for constructing graphs from point clouds and their irregular computation, while efficient transformers such as \proj avoid graph construction and adopt only efficient regular computation.
Moreover, \proj can be further accelerated by applying such as $\operatorname{Float16}$ computation and FlashAttention~\cite{dao2023flashattention}, which we leave as future studies.

\textbf{Ablation Studies.} We conduct ablation studies whose results are shown in Table~\ref{tab:ablation}.
\emph{First}, we evaluate the importance of our proposed kernel, where we replace our kernel with the kernel from the original transformer with absolute positional encoding. However, using the traditional kernel significantly reduces the performance of the model.
\emph{Second}, to show the effectiveness of OR \& AND LSH in a general setting, we remove the use of point coordinates when obtaining aux hash codes. These codes are now computed only based on the query/key vectors. Since query-key alignment is critical, without using point coordinates, we follow~\cite{kitaev2020reformer} by tying query and key vectors, i.e., $\bm{q}_u = \bm{k}_u$ for alignment.
The models with such query-key alignment are highlighted with $^*$. As Table~\ref{tab:ablation} shows, query-key alignment is important. With query-key alignment, the advantage of OR \& AND LSH over OR-only LSH is obvious. \proj by using OR \& AND LSH and point coordinates for query-key alignment achieves the best performance.

\subsection{Scalability Analysis}
The considered tasks cover point clouds with 6k, 10k, and 60k points, offering a preliminary view of HEPT's scalability. To further examine scalability across a broader range of input sizes, we evaluate HEPT on point clouds ranging from 1k ($2^{10}$) to 262k ($2^{18}$) points. 
The results are presented in Fig.~\ref{fig:scaling}, where we also include a comparison with GCN and the three most efficient transformer baselines as indicated by Table~\ref{tab:speed}, and the same settings used in Table~\ref{tab:speed} and Table~\ref{tab:memory} for the pileup mitigation task are employed. The input point clouds are generated randomly to meet the required number of points, and all models are closely matched in terms of FLOPs and trainable parameters (see Table~\ref{tab:memory}). Fig.~\ref{fig:scaling} indicates that HEPT is among the most scalable efficient transformers in terms of both latency and memory usage, even with input sizes extending from $2^{10}$ to $2^{18}$.

\subsection{Sensitivity Analysis}

\begin{table}[t]
\centering
\caption{Performance of HEPT on Tracking-60k with different configurations of hash tables and bucket sizes. Results are reported as $\operatorname{AP@}k\,(\text{GFLOPs})$, i.e., the numbers in parentheses represent the GFLOPs for each configuration.}
\vspace{-2mm}
\label{tab:sensitivity}
\resizebox{\columnwidth}{!}{%
\begin{tabular}{cccc}
\toprule
\multicolumn{1}{c}{\multirow{2}{*}{\# Hash Tables}} & \multicolumn{3}{c}{Block Size} \\
\cmidrule(l{5pt}r{5pt}){2-4}
& 50 & 100 & 150 \\
\midrule
1 & $73.57\,(24.2)$ & $78.60\,(29.8)$ & $80.91\,(35.4)$ \\
3 & $87.47\,(35.6)$ & $91.93\,(52.2)$ & $92.22\,(68.9)$ \\
5 & $91.89\,(46.9)$ & $92.27\,(74.7)$ & $92.34\,(102.6)$ \\
\bottomrule
\end{tabular}
}
\vspace{-6mm}
\end{table}

Table~\ref{tab:sensitivity} evaluates HEPT on Tracking-60k by varying the number of hash tables and block sizes, keeping other settings the same as those used in Table~\ref{tab:main}. 
Notably, configurations using a single hash table correspond to AND-only LSH, which generally performs poorly in practice and our results further verify this claim. For other configurations, HEPT demonstrates robustness, with increased computational budgets generally improving performance.

\vspace{-1mm}
\section{Conclusion}
This work introduces \proj, a new efficient transformer architecture for fast and accurate large-scale point cloud learning in scientific domains. Quantitative analysis on error-computation tradeoff shows the inherent limitations of RFF and the necessity of using OR \& AND LSH to design efficient transformers for applications with local inductive bias. 
Two tasks in HEP have been used for evaluation, where \proj greatly boosts computational speed and predictive accuracy against existing GNNs and transformers.

\section*{Acknowledgement}
The authors thank Kilian Lieret, Gage DeZoort, and Yongbin Feng for
their helpful discussions.
S. Miao, M. Liu, and P. Li are partially supported by the National Science Foundation (NSF) award PHY-2117997 and IIS-2239565. J. Duarte is also supported by the NSF award PHY-2117997 and by the Department of Energy (DOE) award DE-SC0021187.

\section*{Impact Statement}
This paper presents work whose goal is to advance the field of 
Machine Learning. There are many potential societal consequences 
of our work, none which we feel must be specifically highlighted here.

% In the unusual situation where you want a paper to appear in the
% references without citing it in the main text, use \nocite
% \nocite{langley00}

\bibliography{reference.bib}
\bibliographystyle{icml2024}

%%%%%%%%%%%%%%%%%%%%%%%%%%%%%%%%%%%%%%%%%%%%%%%%%%%%%%%%%%%%%%%%%%%%%%%%%%%%%%%
%%%%%%%%%%%%%%%%%%%%%%%%%%%%%%%%%%%%%%%%%%%%%%%%%%%%%%%%%%%%%%%%%%%%%%%%%%%%%%%
% APPENDIX
%%%%%%%%%%%%%%%%%%%%%%%%%%%%%%%%%%%%%%%%%%%%%%%%%%%%%%%%%%%%%%%%%%%%%%%%%%%%%%%
%%%%%%%%%%%%%%%%%%%%%%%%%%%%%%%%%%%%%%%%%%%%%%%%%%%%%%%%%%%%%%%%%%%%%%%%%%%%%%%
\newpage
\appendix
\onecolumn

\begin{figure}[t]
    \centering
    \begin{subfigure}[t]{0.32\columnwidth} 
        \centering
        \includegraphics[trim={0cm 0cm 0.2cm 0cm}, clip, width=1.0\linewidth]{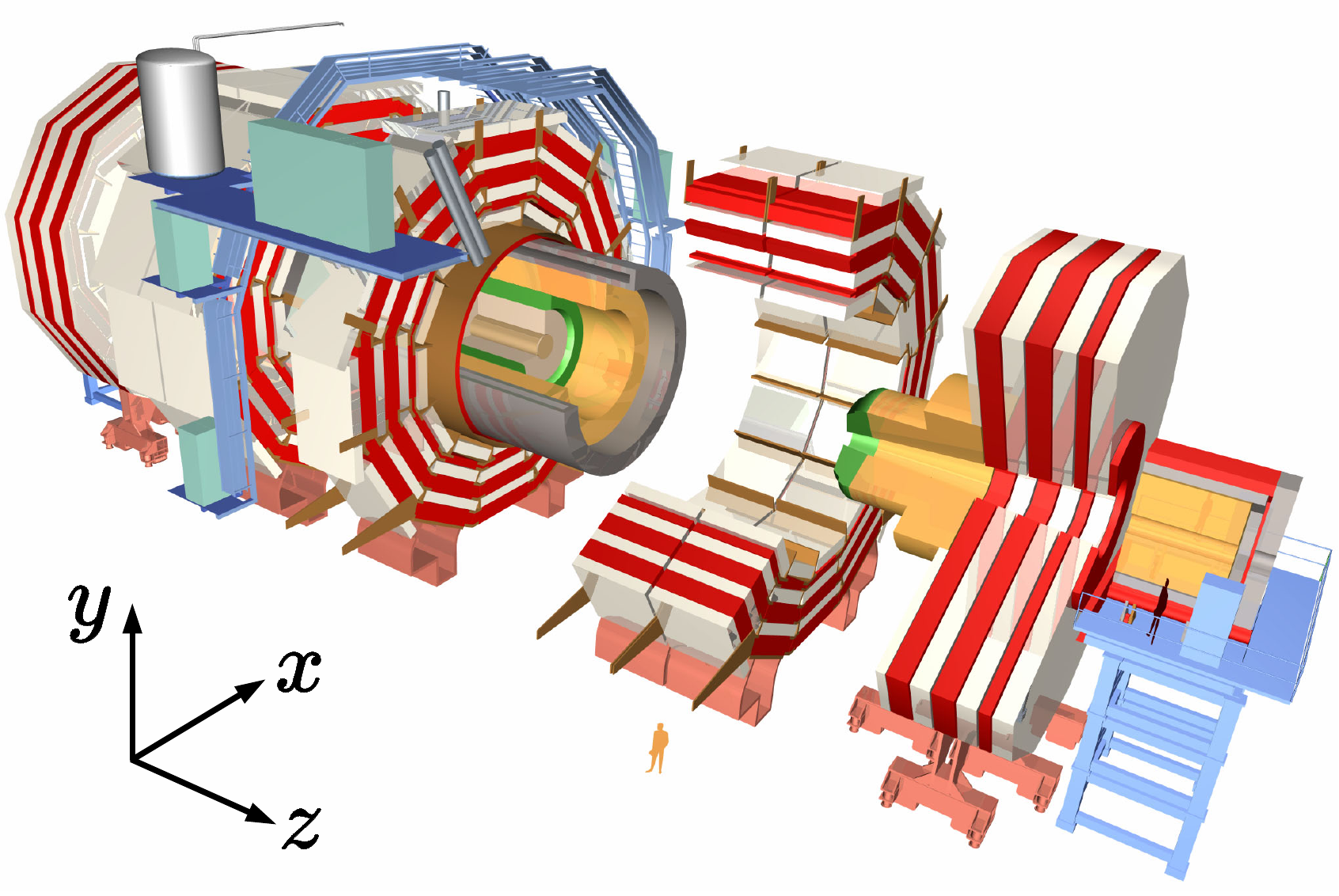}
        % \vspace*{-2mm}
        \caption{3D view of CMS detector}
    \end{subfigure}
    \hspace{0.01\columnwidth}
    \begin{subfigure}[t]{0.32\columnwidth}
        \centering
        \includegraphics[trim={1.3cm 0.0cm 0cm 0cm}, clip,width=0.70\linewidth]{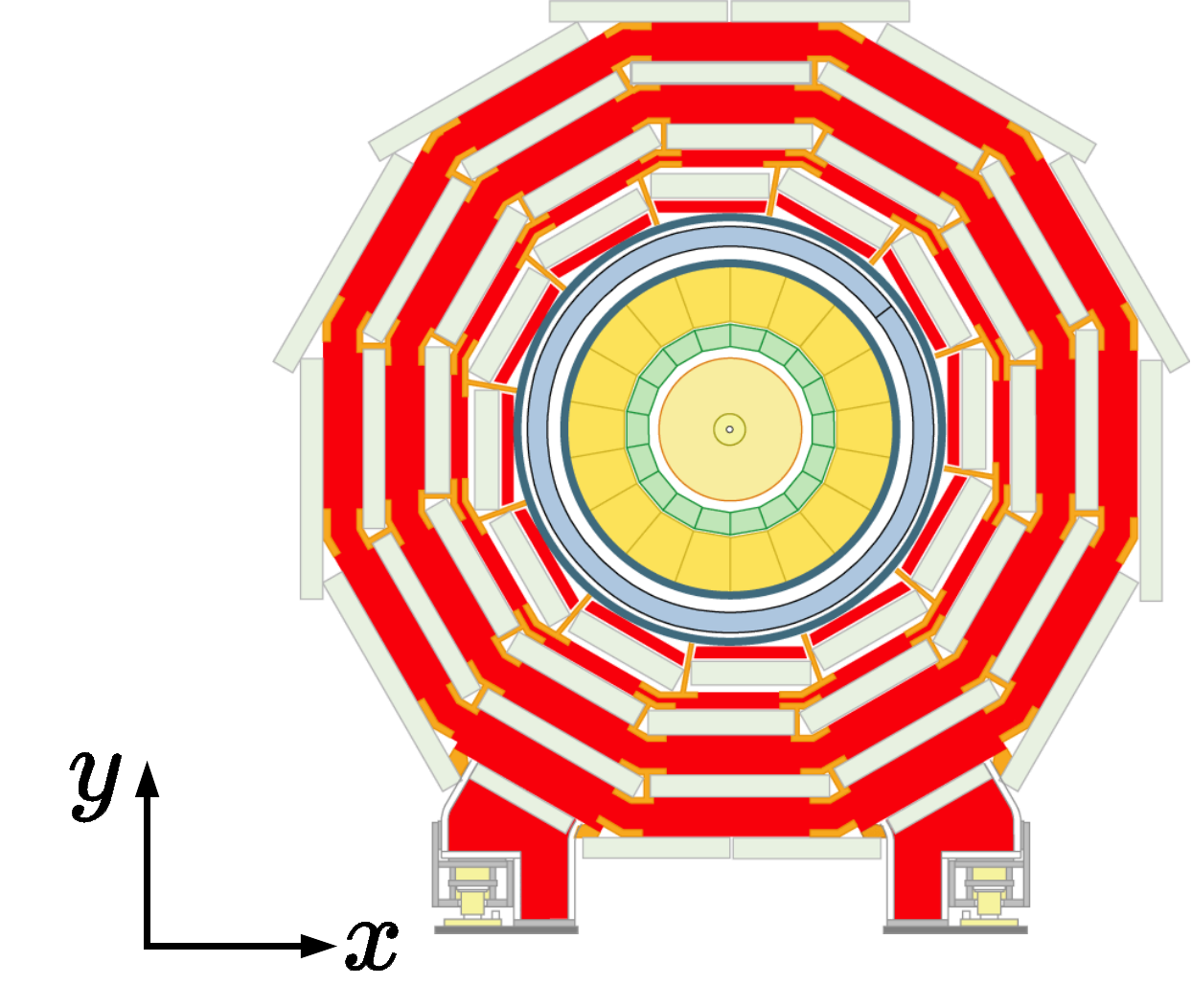}
        % \vspace*{-2mm}
        \caption{Transverse view of CMS detector}
    \end{subfigure}
    \hspace{0.01\columnwidth}
    \begin{subfigure}[t]{0.32\columnwidth}
        \centering
        \includegraphics[trim={0cm -0.5cm 0.1cm 0.0cm}, clip,width=1\linewidth]{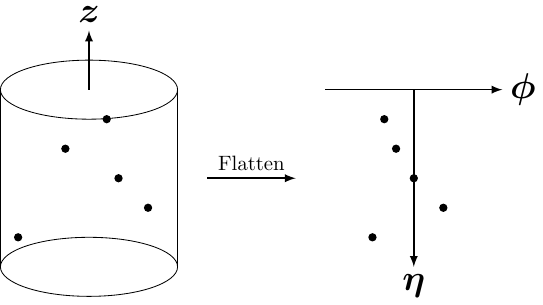}
        % \vspace*{-2mm}
        \caption{Converting to $\eta-\phi$ space}
        \label{fig:convert}
    \end{subfigure}
    \vspace*{-2mm}
    \caption{Visualizations of CMS detector and coordinate systems in HEP analysis, adapted from \citet{CMS-PHO-GEN-2012-002}.}
    % , demonstrating the transition from physical structure to analytical space, i.e., the $\eta-\phi$ space.}
    \label{fig:coords}
    % \vspace{-0.5cm}
\end{figure}

\begin{table}[t]
% \vspace{-2mm}
\caption{Statistics of the three datasets.}
\label{tab:datasets}
\vspace{-2mm}
\centering
\resizebox{\columnwidth}{!}{%
\begin{tabular}{lcccccc}
\toprule
& \# Point Clouds & \# Features in $\bm{X}$&  \# Dimensions in $\bm{\rho}$  &  Avg. \# Points per Cloud & Avg. \# Labeled Pairs per Cloud %& %\# Classes 
& Class Ratio (Pos./Neg.)\\
\midrule
{Tracking-6k}  & 500  &16  & 2  &6.8k & 3.7M &N/A    \\
{Tracking-60k} & 50   &16  & 2  &56.7k& 75.5M&N/A  \\
{Pileup-10k} & 1000  & 8 & 2 &10.3k & N/A  & 0.039 \\
\bottomrule
\end{tabular}%
}
% \vspace{-4mm}
\end{table}

\begin{figure}[t]
    \centering
    \includegraphics[trim={0cm 0.0cm 0.0cm 0cm}, clip, width=0.9\linewidth]{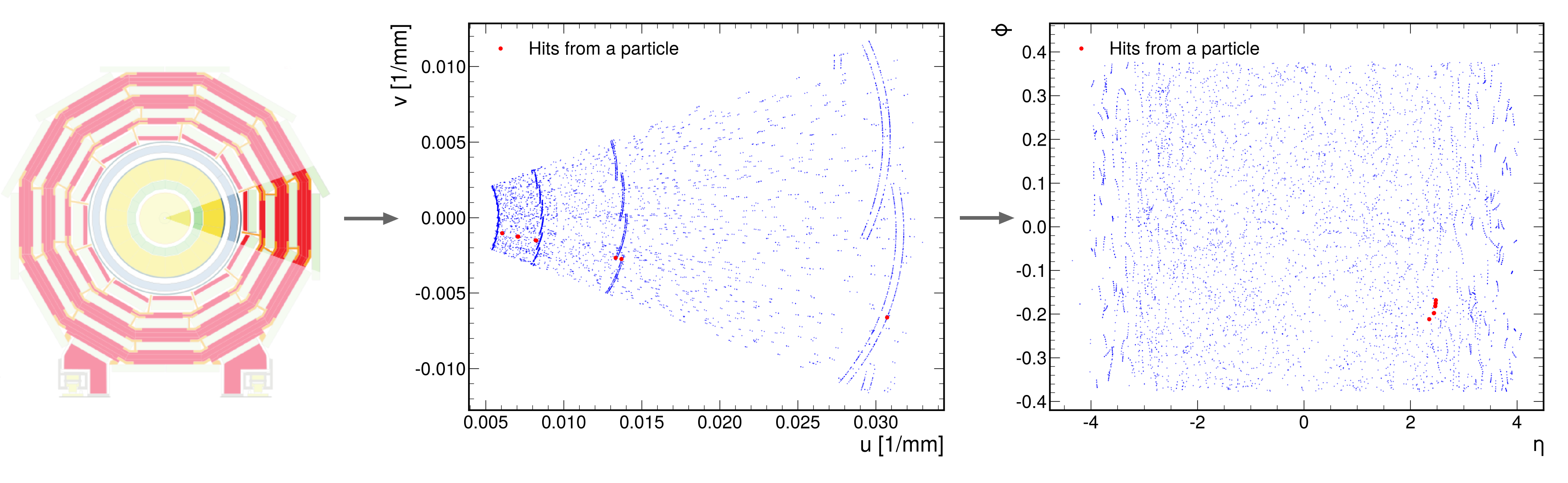}
    \caption{Visualization of a sample from the Tracking datasets, showcasing only the points collected from the detectors in the highlighted region for better illustration. The leftmost part of this figure is adapted from~\citet{CMS-PHO-GEN-2012-002}.}
    \label{fig:tracking_data}
\end{figure}

\begin{figure}[t]
    \centering
    \includegraphics[trim={0cm 0cm 0.0cm 0cm}, clip, width=0.7\linewidth]{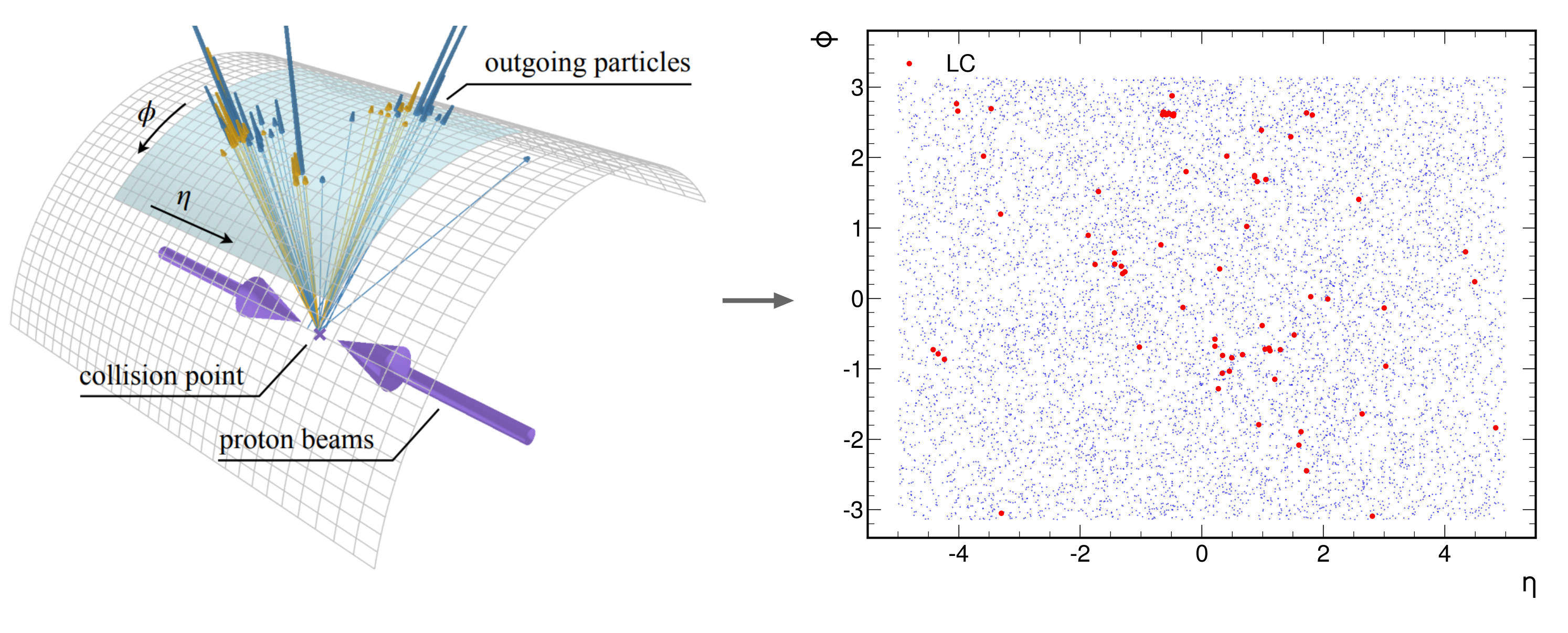}
    \caption{Visualization of a sample from the Pileup dataset. The left part of this figure is adapted from~\citet{qu2022particle}.}
    \label{fig:pp_data}
\end{figure}

\section{Details of the Datasets}\label{app:tasks}

\subsection{Background}
\textbf{Tracking-6k \& Tracking-60k.} 
These two datasets are for charged particle tracking at the CERN LHC, which 
is a crucial task in HEP as it enables precise identification and reconstruction of charged particles' paths, facilitating the determination of their momentum, energy, etc. This capability is crucial for identifying particle types and reconstructing collision events, laying the foundation for precise measurements of particle properties such as mass and charge. These measurements are vital for testing Standard Model predictions and probing for new physics~\cite{langacker2017standard}. Additionally, accurate tracking is instrumental in suppressing background noise, distinguishing between signal events and the more common processes, thereby enhancing the detection of rare phenomena and contributing significantly to our understanding of fundamental particles and their interactions at high energies.
The tracking process utilizes sophisticated detector systems, such as the inner detector of the ATLAS and CMS experiments, to reconstruct particle trajectories from collision events. 
However, challenges arise from the vast volume of data generated, background noise, and experimental complexities, necessitating robust yet efficient algorithms, e.g.,
the LHC operates at an extremely high collision rate, with millions of proton-proton collisions occurring every second to be analyzed.
Traditional combinatorial-Kalman-filter-based track reconstruction~\cite{Strandlie:2010zz} cannot easily scale up to future LHC data and is difficult to parallelize on heterogeneous computing platforms. And the inherent complexity of GNNs renders it hard for their efficient deployment at the LHC.
This study is performed using the TrackML dataset~\cite{amrouche2021tracking}, which simulates the worst-case future LHC pileup conditions (200 interactions per proton bunch crossing) in a generic tracking detector geometry.

\textbf{Pileup-10k.} 
This dataset focuses on pileup mitigation, a critical challenge in analyzing data from the LHC, where multiple proton-proton collisions occur simultaneously within the same or nearby bunch crossings. These overlapping interactions, known as pileup collisions (PCs), complicate the extraction of meaningful data from the primary collision of interest. Effective pileup mitigation is essential for maintaining the physics sensitivity of LHC experiments, as it involves distinguishing and removing the contributions of noisy particles from PCs to isolate signals from the leading collision (LC), which is associated with the primary vertex having the highest sum of particle momentum. During the 2016 to 2018 LHC runs, the average pileup level was around 40, but this figure is anticipated to rise to as much as 200 in future runs (i.e., more noise and larger input sizes), significantly increasing the complexity of data analysis. The reconstruction of particles from LHC collisions relies on tracking detector hits and calorimeter energy deposits. While the SOTA tracking systems allow for tracking and vertexing of charged particles, enabling the straightforward identification and removal of those associated with PCs, the main challenge lies in dealing with neutral particles, such as photons and neutral hadrons, which do not leave tracks. To address these challenges, simulation samples based on the {\tt DELPHES} framework~\cite{deFavereau:2013fsa} are utilized, generating both charged and neutral particles from selected physics processes alongside detector resolution effects, to develop and test pileup mitigation strategies.

\begin{table}[t]
% \vspace{-2mm}
\caption{Point Features in the two tasks.}
\label{tab:point_features}
\vspace{-2mm}
\centering
\resizebox{0.9\textwidth}{!}{%
\begin{tabular}{lll}
\toprule
Task & Variable & Definition \\
\midrule
\multirow{14}{*}{Tracking} & $r$  & Radial distance from the beam axis in cylindrical coordinates. \\
 & $\phi$ & Azimuthal angle around the beam axis in cylindrical coordinates. \\
 & $z$ & Longitudinal position along the beam axis. \\
 & $\eta$ & Pseudorapidity, measuring the angle of particle trajectory relative to the beam axis. \\
 & $u$ & Local coordinate axis in a detector plane, orthogonal to $v$. \\
 & $v$ & Local coordinate axis in a detector plane, orthogonal to $u$. \\
 & $\operatorname{charge\_frac}$ & Fraction of the charge collected by a sensor, indicating the quality of a hit. \\
 & $\ell_{\eta}$ & Local pseudorapidity, calculated within a specific sub-detector region. \\
 & $\ell_{\phi}$ & Local azimuthal angle, measured within a specific sub-detector region. \\
 & $\ell_{x}$ & Local $x$ coordinate, representing position within a sub-detector. \\
 & $\ell_{y}$ & Local $y$ coordinate, representing position within a sub-detector. \\
 & $\ell_{z}$ & Local $z$ coordinate, representing position along the beam axis within a sub-detector. \\
 & $g_{\eta}$ & Global pseudorapidity, calculated with respect to the overall detector geometry. \\
 & $g_{\phi}$ & Global azimuthal angle, measured with respect to the overall detector geometry. \\
 \midrule
\multirow{8}{*}{Pileup} & $\eta$  & Pseudorapidity, a measure of the angle relative to the beam axis. \\
& $\phi$ & Azimuthal angle around the beam axis in cylindrical coordinates. \\
& $p_x$ & Momentum component of the particle in the $x$ direction. \\
 & $p_y$ & Momentum component of the particle in the $y$ direction. \\
 & $p_t$ & Transverse momentum, calculated from the $x$ and $y$ momentum components. \\
 & $\operatorname{Rapidity}$ & A measure of the particle's velocity in the direction of the beam. \\
 & $E$ & Energy of the particle. \\
 & $\operatorname{PID}$ & Particle ID, indicating the type of the particle, e.g., muon, electron, etc. \\
\bottomrule
\end{tabular}%
}
% \vspace{-4mm}
\end{table}

\subsection{Data Format}\label{app:data}
The statistics of the three datasets used are shown in Table~\ref{tab:datasets}, and
Fig.~\ref{fig:tracking_data} and Fig.~\ref{fig:pp_data} provide visualizations of the data samples.
Below we introduce the features and labels included in the datasets.

\textbf{Point Coordinates $\bm{\rho}$.} Following the standard pipeline in HEP analysis, each point in the three datasets is associated with a 2-d coordinate in $\eta-\phi$ space~\cite{thais2022graph}. Fig.~\ref{fig:coords} visualizes the CMS detector at the LHC with its 3D and transverse view, where collisions occur at the center, surrounded by multiple layers of cylindrical detectors, and particles flying out from the center will hit the detectors at various locations. Imagine cutting the cylindrical detector along its length and flattening it out into a 2D plane, as illustrated in Fig.~\ref{fig:convert}. The vertical axis of this plane can represent the pseudorapidity ($\eta$), which indicates how far up or down (along the beam axis) the particle hit the detector. The horizontal axis represents the azimuthal angle ($\phi$), indicating the particle's direction around the beam axis.

\textbf{Point Features $\bm{X}$.} Table~\ref{tab:point_features} lists the variables included as point features in the two tasks and their definitions. Note that geometric features, e.g., $\eta$ and $\phi$, are also included as point features for model learning, which is a common practice in HEP and results in non-equivariant models with respect to these geometric features. We follow this practice in our implementation similar to previous works~\cite{lieret2023high, li2023semi} and whether equivariant models are useful in HEP has not reached a consistent conclusion~\cite{thais2023equivariance}.

\textbf{Ground-Truth Labels.} For the Tracking datasets, any pairs of hits (points) from the same particle are labeled as positive samples to be learned with similar embeddings, while for each hit its neighboring 256 hits from other particles are labeled as negative pairs. For the Pileup dataset, a particle (point) is labeled positive if it is from LC, and otherwise, it is labeled negative. Note that this task is highly imbalanced, and only about $3.9\%$ of points are labeled positive.

\subsection{Task Formulation}\label{app:taskFormulation}
Below we describe how they are formulated as ML tasks.

\textbf{Tracking-6k \& Tracking-60k.} To learn clustered embeddings for hits originating from the same particle, we adopt contrastive learning with InfoNCE loss~\cite{oord2018representation}. 
For any pairs of hits from the same particle, they are labeled as positive samples to be learned with similar embeddings, while for each hit its neighboring 256 hits from other particles are labeled as negative pairs. 
Therefore, with learned embeddings for a point $u$, denoted as $\bm{h}_u$,
the loss is computed as 
$$
    \mathcal{L}_{\operatorname{InfoNCE}} = - \log \frac{\exp(\operatorname{sim}(\bm{h}_u, \bm{h}_v^+))}{\exp(\operatorname{sim}(\bm{h}_u, \bm{h}_v^+)) + \sum_{\bm{h}_v^- \in \mathcal{N}} \exp(\operatorname{sim}(\bm{h}_u, \bm{h}_v^-))},
$$
where $\operatorname{sim}$ is some similarity metric, $\bm{h}_v^+$ is the embeddings of a point $v$ that is a positive pair with point $u$, and $\mathcal{N}$ is a set of negative pairs for point $u$, whose embeddings are denoted as $\bm{h}_v^-$. 
In our experiments, we adopt $\operatorname{sim}$ as $\exp(- \| \bm{h}_u - \bm{h}_v \|^2/\tau)$, where $\tau$ is a positive hyperparamter. We also experimented with angular distances and dot product as the similarity metric, and they can hardly work even with GNNs that need no approximation in computation.
The dataset split is done in a cloud-wise way, i.e., $80\%$/$10\%$/$10\%$ of point clouds are used to train/validate/test models, respectively.

\textbf{Pileup-10k.} Each point cloud consists of both charged and neutral particles (points), and models are only trained to predict the class of each neutral particle, i.e., from LC or PCs. Since this is an imbalanced binary classification task, meaning more points are labeled from PCs, the Focal loss~\cite{lin2017focal} is adopted, i.e., a variant of cross-entropy loss for imbalanced classification, for better performance. The dataset split is also done in a cloud-wise way, i.e., $80\%$/$10\%$/$10\%$ of point clouds are used to train/validate/test models, respectively.

\section{Extended Related Work}\label{app:related}
\textbf{Efficient Transformers in NLP.}
Leveraging local inductive biases in NLP, several works introduce local attention patterns~\cite{child2019generating, tay2020sparse, beltagy2020longformer, zaheer2020big}.
Other approaches focus on exploiting the inherent properties of the attention matrix, employing techniques such as RFF or Nystrom for low-rank approximations~\cite{wang2020linformer, xiong2021nystromformer, peng2021random, choromanski2020rethinking}, leveraging the sparsity of the attention matrix with various LSH-based methods~\cite{kitaev2020reformer, daras2020smyrf, zandieh2023kdeformer, han2023hyperattention}, or combining both properties~\cite{chen2021scatterbrain}. Research has also been conducted on optimizing attention computation on hardware like GPUs~\cite{dao2022flashattention, dao2023flashattention} and in designing new linear transformers that replace the original Softmax attention~\cite{katharopoulos2020transformers, sun2023retentive}.

\textbf{Other Efficient Transformers.}
In CV, domain-specific knowledge has led to the adoption of local neighborhood attention~\cite{zhao2021point, mao2021voxel} and the techniques that partition 2D spaces into grids for parallel processing~\cite{fan2022embracing, sun2022swformer}. To further improve computational regularity, some methods~\cite{liu2023flatformer, wang2023dsvt} project points/voxels onto axes, forming equal-sized point sets along each axis.
In scalable graph transformers, where the input graphs are assumed to be given, approaches vary from sampling-based techniques for large graphs~\cite{chen2022nagphormer, zhang2022hierarchical, wu2022nodeformer} to those that approximate the spectral properties of input graphs~\cite{shirzad2023exphormer}.

\textbf{Attention Kernels \& Relevant Positional Encoding.} The original transformer~\cite{vaswani2017attention} utilizes the attention kernel $\exp(\bm{q}_i^\top\bm{k}_j)$ with absolute positional encoding (PE) added to the query/key vectors to capture positional information between tokens in sentences.
Following this idea, many works, such as those for 3D detection from CV~\cite{liu2023flatformer, wang2023dsvt} adopt PE similar to this. Instead of absolute PE, there are also studies utilizing relative positional encoding (RPE), and one way to formulate it is $\exp(\bm{q}_i^\top\bm{k}_j +  b_{j-i})$, where $b_{j-i}$ could be some learnable parameters for each relative position~\cite{shaw2018self, wu2021rethinking}. 
Recently, FLT~\cite{choromanski2023learning} also considers generalizing the idea of RPE to GDL tasks with point coordinates, and they adopt RFF-based methods with a kernel, e.g., 
$\exp(\bm{q}_i^\top\bm{k}_j + \omega \exp (- \| \bm{\rho}_i - \bm{\rho}_j \|^2 / \sigma^2))$,  where $\bm{\rho}$ is point coordinates. However, as demonstrated by our analysis and empirically in Table~\ref{tab:main}, such RFF-based RPE implementation does not work well for large-scale point clouds with local inductive bias (and there may not be an easy way to adopt LSH-based methods to approximate this kernel). Our proposed kernel can also be viewed as a type of RPE regarding the way to leverage the point coordinates, as it effectively incorporates distance information between points in attention computation. However, no efficient transformers (with near-linear complexity) have been developed to enable the approximation of our type of RPE. HEPT is the first work that enables effective approximations of it via LSH and makes the obtained efficient transformer better suited for GDL tasks with local inductive bias.

\textbf{Point Cloud Serialization for Efficient Point Cloud Processing.} 
Recently, Point Transformer V3~\cite{wu2024point} summarizes a series of works in CV as point cloud serialization (PCS) techniques, which project irregular point cloud data into regular sequences with locality in the original 3D space being preserved to some degree for efficient computation~\cite{liu2023flatformer, wang2023dsvt, Wang2023OctFormer}.
These studies typically employ fixed serialization patterns, such as those induced from Hilbert or Z-order curves~\cite{wu2024point}. 
Actually, HEPT can also be seen as a PCS technique, but it uses \emph{randomized} serialization patterns through LSH to project point clouds into regular sequences for computing block-diagonal attention. 
Since those fixed serialization patterns would overlook specific locality patterns~\cite{wu2024point} and only work for low-dimensional data, the \emph{randomized} approach in HEPT offers an effective alternative or complementary PCS method, due to its guaranteed and analyzable ability to capture locality in the data, even when it is high-dimensional.

\section{Implementation Details}\label{app:impl}

\subsection{Implementation of \proj}
We follow the standard architecture of the transformer~\cite{vaswani2017attention}, i.e.,
\begin{align*}
    \bm{H} &= \operatorname{LN}(\bm{H}^{\ell}) \\
    \bm{Q}=[\bm{H} \bm{W}_Q \|  \sqrt{2\omega} \bm{\rho} ], \quad \bm{K}&=[\bm{H} \bm{W}_K \| \sqrt{2\omega} \bm{\rho} ], \quad \bm{V}=\bm{H} \bm{W}_V \\
    \bm{H}^{\prime} & = \bm{H} + \operatorname{MHSA}(\bm{Q}, \bm{K}, \bm{V}) \\
    \bm{H}^{\ell+1} & = \bm{H}^{\prime} + \operatorname{FFN}(\operatorname{LN}(\bm{H}^{\prime})),
\end{align*}
where $\bm{H}^{\ell} \in \mathbb{R}^{n\times h}$ is the learned point embeddings at the $\ell^{th}$ layer, $\operatorname{LN}$ is the layer normalization~\cite{ba2016layer}, $\bm{W}_Q, \bm{W}_K, \bm{W}_V$ are learnable projection matrices, $\bm{\rho} \in \mathbb{R}^{n\times k_2}$ is point coordinate matrix, $\omega\in \mathbb{R}^{+}$ are positive learnable parameters,
$\|$ concatenates two matrices along the column dimension, and $\operatorname{FFN}$ denotes a feed-forward layer. $\operatorname{MHSA}$ is the multi-head self-attention mechanism, and in our case, the unnormalized attention scores between query-key pairs will be computed via our kernel, and the full attention matrix is approximated by our LSH-based methods illustrated in Fig.~\ref{fig:arch}.

In our implementation for the three datasets, \proj uses $4$ layers and $24$ hidden dimensions with $8$ attention heads in each layer. In addition, we adopt $m_1 = 3$ hash tables, each with $m_2 = 3$ hash functions for the three datasets. The block size of attention computation is set to $100$, and we use only point coordinates without point hidden representations as the AND hash inputs, 
i.e., 
${L}_{\bm{{q}}_{u}}^{({i(1+\ell)})} = {L}_{\bm{{k}}_u}^{({i(1+\ell)})}  = h_{\bm{a}_{\ell}}(\bm{\rho}_u) (= \bm{a}_{\ell}\cdot\bm{\rho}_u)$, for $\ell=1,2$, where in HEP, the points are in a 2-d $\eta-\phi$ space~\cite{thais2022graph}, as detailed in Appendix~\ref{app:data}.
The total number of buckets $\prod_{j=2}^{m_2} B_{ij}$ is tuned for different datasets. 

For the Tracking datasets, the resulting 24-dimensional point embeddings are first projected into 12 dimensions to ensure a fair comparison with SOTA GNNs, which output 12-dimensional final point embeddings. Then, the embeddings are fed into the InfoNCE loss as described in Sec.~\ref{app:taskFormulation} to learn similar point embeddings for points from the same particle and dissimilar embeddings for those from different particles.
For the Pileup dataset, the resulting point embeddings are projected into 1 dimension with a $\operatorname{sigmoid}$ layer for the computation of Focal loss.

\begin{table}[t]
% \vspace{-3mm}
\caption{FLOPs (G) and GPU memory usage (GB). The $\textbf{Bold}^\dagger$, $\textbf{Bold}^\ddagger$, and $\textbf{Bold}$ highlight the first, second, and third best results, respectively. Note that SOTA GNNs for the Tracking datasets employ rather large models and involve complex operations along edges, leading to a significant amount of FLOPs. Besides these two models, other models for the same dataset are ensured to have the same number of trainable parameters and similar FLOPs if possible. GNNs, despite with fewer FLOPs, may have high training/test time due to (dynamic) graph construction and irregular computations.}
\label{tab:memory}
\vspace{-2mm}
\centering
\resizebox{\columnwidth}{!}{%
\begin{tabular}{lccccccccc}
\toprule
\multicolumn{1}{c}{\multirow{2}{*}{}} & \multicolumn{3}{c}{Tracking-6k}          & \multicolumn{3}{c}{Tracking-60k}       & \multicolumn{3}{c}{Pilup-10k}
\\
\cmidrule(l{5pt}r{5pt}){2-4} \cmidrule(l{5pt}r{5pt}){5-7} \cmidrule(l{5pt}r{5pt}){8-10} 
\multicolumn{1}{c}{}                  
& FLOPs & Train Mem. & Test Mem.& FLOPs & Train Mem. & Test Mem. & FLOPs & Train Mem.& Test Mem.\\
\midrule
SOTA GNNs & $316.1$ & $4.7$ & $1.8$ & $3717.8$ & OOM & $16.9$ & $3.9$ & $3.3$ & $1.5$
\\
\midrule
Reformer         & 
                $6.8$ & $1.8$ & 
                $0.7$ & $101.0$ & 
                $19.0$ & $9.4$ & $8.0$ & $\mathbf{1.4}$ & $0.6$
\\
SMYRF  & 
                $5.3$ & $1.4$ & $\mathbf{0.6}$ &  
                $54.6$ & $20.9$ & $3.6$ & 
                $8.1$ & $1.6$ & $\mathbf{0.5}$
\\
HyperAttn &
$5.2$ & $\mathbf{1.2}^{\ddagger}$ &$0.52$ &
$54.3$ & $19.4$ &	$3.3$ &
$8.7$& $1.5$ &$0.6$
\\
Performer   & 
                $5.6$ & $\mathbf{1.3}$ & $\mathbf{0.6}$ &
                $58.6$ & $20.0$ & $3.7$ & $9.7$ & ${1.5}$ & $0.6$
\\
FLT     & 
                $6.4$ & $\mathbf{1.2}^{\ddagger}$ & $\mathbf{0.5}^{\ddagger}$ &
                $66.9$ & $19.2$ & $3.5$ & $9.9$ & $\mathbf{1.3}^{\ddagger}$ & ${0.6}$
\\
ScatterBrain     & 
                $6.4$ & $2.1$ & $0.7$ &
                $58.8$ & $21.0$ & $5.2$ & $9.7$ & $2.6$ & $0.8$
\\
PointTrans     & 
                $2.9$ & $\mathbf{1.2}^{\ddagger}$ & $\mathbf{0.5}^{\ddagger}$ &
                $30.6$ & $\mathbf{18.3}^{\ddagger}$ & $4.4$ & $4.9$ & $\mathbf{1.3}^{\ddagger}$ & ${0.7}$
\\
FlatFormer     & 
                $5.7$ & $\mathbf{1.3}$ & $\mathbf{0.5}^{\ddagger}$ &
                $58.7$ & $19.8$ & $\mathbf{2.5}^{\ddagger}$ & $9.6$ & $1.6$ & $\mathbf{0.4}^{\ddagger}$
\\
\midrule
GCN     & 
                $2.0$ & $\mathbf{1.2}^{\ddagger}$ & $0.6$ & $20.9$ & $\mathbf{18.6}$ & ${4.1}$ & $3.4$ & $\mathbf{1.3}^{\ddagger}$ & $\mathbf{0.4}^{\ddagger}$
\\
DGCNN     & 
                $11.9$ & $1.6$ & $0.6$ &
                $124.0$ & $22.3$ & $4.1$ & $15.5$ & $1.9$ & $0.6$
\\
GravNet     & 
                $2.1$ & $\mathbf{0.8}^{\dagger}$ & $\mathbf{0.2}^{\dagger}$ &
                $21.9$ & $\mathbf{14.5}^{\dagger}$ & $\mathbf{2.4}^{\dagger}$ & $4.3$ & $\mathbf{0.7}^{\dagger}$ & $\mathbf{0.4}^{\ddagger}$
\\
GatedGNN     & 
                $2.4$ & $2.5$ & $1.2$ &
                $24.8$ & $23.8$ & $10.6$ & $3.9$ & $3.3$ & $1.5$
\\
\midrule
\proj     & 
            $5.0$ & $\mathbf{1.3}$ & $\mathbf{0.5}^{\ddagger}$ &
            $52.2$ & $19.9$ & $\mathbf{2.9}$ & $8.5$ & $\mathbf{1.3}^{\dagger}$ & $\mathbf{0.2}^{\dagger}$
\\
\bottomrule
\end{tabular}%
}
\vspace{-5mm}
\end{table}

\subsection{Implementation of Baselines \& Hyperparameter Tuning}
All baseline transformers are implemented following the same standard transformer architecture as above with the full self-attention module replaced with the corresponding proposed efficient attention modules, and GNNs are realized using the implementation from PyTorch Geometric~\cite{Fey_Fast_Graph_Representation_2019}.

For baseline transformers, as the number of trainable parameters and the architecture is fixed, we only need to tune method-specific hyperparameters, and we are to tune these hyperparameters in a (small) range of FLOPs that would not deviate too much (e.g., $\pm 10\%$ GFLOPs) if possible such that all baseline transformers are ensured to be with similar FLOPs for a fair comparison. For GNN baselines, we mainly follow the implementation from the authors' code and change the hidden dimensions to align the number of trainable parameters.
In the following, we describe in detail how each baseline is implemented and tuned, and Table~\ref{tab:speed} and Table~\ref{tab:memory} benchmark the computational speed, FLOPs, and GPU memory usage for the tuned baseline models.

\textbf{Basic Settings.} 
For all datasets and baselines, $\operatorname{Adam}$ optimizer~\cite{kingma2014adam} is used.
For the two Tracking datasets, the learning rate is tuned from $\{1e^{-2}, 1e^{-3}\}$, and is multiplied by a factor of $0.5$ every $500$ epochs. Any model will be early-stopped if there is no improvement in the validation set over $200$ consecutive epochs, and models can be trained for up to $2000$ epochs to ensure convergence. Models for these two datasets are set with $0.33$M trainable parameters for efficiency.
For the Pileup dataset, the learning rate is tuned from $\{1e^{-3}, 1e^{-4}\}$, and is multiplied by a factor of $0.5$ if there is no improvement in the validation set for $20$ epochs. For this dataset, models can be trained for up to $200$ epochs for convergence, and they are set with $0.31$M trainable parameters.

\textbf{\proj.} Denote $G$ the total number of desired buckets (i.e., the number of unique aux hash codes allowed) when obtaining AND hash codes, which is tuned from $\{10, 15, 20\}$ for Tracking-6k, from $\{100, 150, 200 \}$ for Tracking-60k, from $\{100, 120, 140 \}$ for Pileup-10k. And $B_{ij}$'s are generated randomly such that $\prod_{j=2}^{m_2} B_{ij}=G$. Note that $B_{ij}$'s do not have to be integers.

\textbf{Reformer~\cite{kitaev2020reformer}} is implemented via~\cite{pytorchreformer}.
Its hyperparameters are tuned from $\{(\text{Block Size}: 150,\, \text{\# Hash Tables}: 3), (\text{Block Size}: 100,\, \text{\# Hash Tables}: 3), (\text{Block Size}: 100,\, \text{\# Hash Tables}: 2)\}$.

\textbf{SMYRF~\cite{daras2020smyrf}} is implemented via~\cite{flyrepo}. Its hyperparameters are tuned from $\{(\text{Block Size}: 150,\, \text{\# Hash Tables}: 3), (\text{Block Size}: 100,\, \text{\# Hash Tables}: 3), (\text{Block Size}: 100,\, \text{\# Hash Tables}: 2)\}$.

\textbf{HyperAttn~\cite{han2023hyperattention}} is implemented via the author-provided code. Its hyperparameters are tuned from $\{(\text{Block Size}: 100,\, \text{Sample Size}: 300), (\text{Block Size}: 150,\, \text{Sample Size}: 200), (\text{Block Size}: 100,\, \text{Sample Size}: 200)\}$.

\textbf{Performer~\cite{choromanski2020rethinking}} is implemented via~\cite{pytorchperformer}. Its number of feature map dimensions is tuned from $\{ 150,\, 200,\, 250 \}$.

\textbf{FLT~\cite{choromanski2023learning}} is implemented via~\cite{pytorchperformer, fasttransformers}. Its hyperparameters are tuned from $\{(\text{\# Feature Maps for RPE}: 10,\, \text{\# Feature Maps for Attn}: 150), (\text{\# Feature Maps for RPE}: 10,\, \text{\# Feature Maps for Attn}: 100), (\text{\# Feature Maps for RPE}: 20,\, \text{\# Feature Maps for Attn}: 100)\}$.

\textbf{ScatterBrain~\cite{chen2021scatterbrain}} is implemented via~\cite{flyrepo}. Its hyperparameters are tuned from $\{(\text{Block Size}: 100,\, \text{\# Hash Tables}: 2,\, \text{\# Feature Maps}: 100), 
(\text{Block Size}: 100,\, \text{\# Hash Tables}: 3,\, \text{\# Feature Maps}: 50), 
(\text{Block Size}: 50,\, \text{\# Hash Tables}: 2,\, \text{\# Feature Maps}: 150)
\}$.

\textbf{FlatFormer~\cite{liu2023flatformer}} is implemented via the author-provided code to adapt to general point-cloud data. We follow its proposed architecture, which is a bit different from the standard transformer. We tune its ``window shape" by projecting points into each axis and partitioning each axis equally into $N$ parts. This $N$ is tuned from $\{ 20,\, 30,\, 40\}$ for Tracking-6k, $\{ 100,\, 150,\, 200\}$ for Tracking-60k, from $\{ 30,\, 40,\, 50\}$ for Pileup-10k.

\textbf{Other Baselines.} For Point Transformer~\cite{zhao2021point}, GCN~\cite{kipf2016semi}, and GravNet~\cite{qasim2019learning}, we directly adopt the implementation from PyTorch Geometric. For 
DGCNN~\cite{qu2020jet}, we follow the description in the paper and modify the implementation from PyTorch Geometric accordingly. For GatedGNN~\cite{li2023semi}, we use the author-provided code, which is also based on PyTroch Geometric. These methods do not have extra hyperparameters to tune, and we change their hidden dimensions accordingly to align the number of trainable parameters.

\begin{table}[t]
\centering
\caption{Computational cost of each module in HEPT for inference latency (ms).}
\vspace{-0.2cm}
\label{tab:inference_latency_bd}
\begin{small}
\begin{tabular}{lccc}
\toprule
Module & Tracking-6k & Tracking-60k & Pileup-10k \\
\midrule
Attn & $5.8\,(83\%)$ & $52.8\,(91\%)$ & $9.2\,(86\%)$ \\
Other & $1.2\,(17\%)$ & $5.1\,(9\%)$ & $1.5\,(14\%)$ \\
\midrule
Total & $7.0\,(100\%)$ & $57.9\,(100\%)$ & $10.7\,(100\%)$ \\
\bottomrule
\end{tabular}
\end{small}
\vspace{-0.5cm}
\end{table}
\begin{table}[t]
\centering
\caption{Computational cost of each module in HEPT for training latency (ms).}
\vspace{-0.2cm}
\label{tab:training_latency_bd}
\begin{small}
\begin{tabular}{lccc}
\toprule
Module & Tracking-6k & Tracking-60k & Pileup-10k \\
\midrule
Loss+Backward & $330\,(97.6\%)$ & $2248\,(97.2\%)$ & $28.2\,(70\%)$ \\
Attn & $6.6\,(2.0\%)$ & $57.1\,(2.5\%)$ & $10.4\,(26\%)$ \\
Other & $1.3\,(0.4\%)$ & $6.2\,(0.3\%)$ & $1.7\,(4\%)$ \\
\midrule
Total & $338\,(100\%)$ & $2312\,(100\%)$ & $40.3\,(100\%)$ \\
\bottomrule
\end{tabular}
\end{small}
\vspace{-0.5cm}
\end{table}

\subsection{Implementation of Numerical Experiments}\label{app:numerical}
The numerical experiments conducted in Sec.~\ref{sec:numerical} generate $n=30,000$ points uniformly distributed across a 2D square with a side length of $10$. 
To model local inductive bias, each point interacts only with its $64$ nearest neighbors, approximating a ground-truth kernel value of $\exp \left(-\frac{1}{2}\|\bm{x}-\bm{y}\|^2\right)$ (we select this kernel because it is the well-known Gaussian kernel~\cite{seeger2004gaussian} and aligns with our proposed attention kernel, but our theoretical results are not limited to this kernel). Points beyond this neighborhood have a kernel value of $0$. 

Then, E$^2$LSH is utilized for approximation. Given a budget of FLOPs $F$, OR-only LSH approximates the kernel values by setting the number of hash functions per table to $1$ and searching the bucket size and the number of hash tables to find its optimized approximation error $\epsilon$ in this point cloud system; OR \& AND LSH searches the bucket size, the number of hash tables, and the number of hash functions per table, to obtain the optimized error for a given number of FLOPs.

The bucket size in E$^2$LSH is determined by adjusting the quantization term $r$ (see Sec.~\ref{sec:prelim}), which is searched from $0.01$ to $5$ with a step size of $0.05$. If searched, both the number of hash tables and the number of hash functions per table are searched from $1$ to $20$, with a step size of $1$.

\section{Latency Breakdown}
Table~\ref{tab:inference_latency_bd} and Table~\ref{tab:training_latency_bd} evaluate the computational cost of each module in HEPT using the same checkpoints from Table~\ref{tab:speed}. We can see that the majority of time is spent on attention computation during inference. 
% This makes sense since the attention mechanism is the most computationally intensive module in (efficient) transformer models. 
On the other hand, during training, the computation of loss and gradient backpropagation dominates the total running time for the tasks considered in this work.

\section{Theoretical Results}

In this section, we provide the proof omitted in the main text. Recall the following settings for our analysis:

\defKernel*
\defSparsity*

Note that the point cloud system $\mathcal{C}$ may be a given deterministic or sample from a distribution $\mathcal{C}\sim \mathbb{P}$. In the latter case, we slightly abuse the notation by still using $\phi$ to denote the distance density function defined in Assumption~\ref{def:sparsity} while after the expectation over $\mathbb{P}$, $\mathbb{E}_{\mathbb{P}}(\phi)$.

To simplify our notation, we denote $k_s(z)=\mathbb{E}_{\bm{x},\bm{y}\in\mathcal{C}}[k_s(\bm{x}-\bm{y}) \mid  \|\bm{x}-\bm{y}\|_2=z]$ and $k_s^2(z)=\mathbb{E}_{\bm{x},\bm{y}\in\mathcal{C}}[k_s^2(\bm{x}-\bm{y}) \mid \|\bm{x}-\bm{y}\|_2=z]$ in the following subsections.

\subsection{Proof of Theorem~\ref{theorem:RFF}}

\theoremRFF*
\begin{proof}
    Denote $\widehat{k}_s(\bm{x}, \bm{y}) = \psi(\bm{x})^\top\psi(\bm{y})$.
    Since $\psi(\bm{x}) =  \sqrt{\frac{2}{{D}}}\Big[\sin (\bm{w}_1^{\top} \bm{x}),  \cos (\bm{w}_1^{\top} \bm{x}), \ldots, \sin (\bm{w}_{D/2}^{\top} \bm{x}), \cos (\bm{w}_{D/2}^{\top} \bm{x})\Big]^{\top}$ with $\bm{w}_i \stackrel{i i d}{\sim} k_s^*(\bm{w})$, its expected squared error w.r.t. $\bm{w}$~\cite{sutherland2015error} is
    $$
    \operatorname{MSE}_{k_s}\left(\widehat{k}_s(\bm{x}, \bm{y})\right)=\mathbb{E}_{\bm{w}}\left[\left(\widehat{k}_s(\bm{x}, \bm{y})-k_s(\bm{x}, \bm{y})\right)^2\right] = \frac{1}{D}(1+ k_s(2(\bm{x}-\bm{y})) - 2k_s(\bm{x}-\bm{y})^2).
    $$
    Therefore, the squared error averaged over all point pairs in the system is
    $$
    \epsilon = \mathbb{E}_{z\sim\phi(z)} \left[  \frac{1}{D}\left(1+ k_s(2z) - 2k_s(z)^2\right) \right]
    = \frac{1}{D}\left(1 + \mathbb{E}_{z\sim\phi(z)}\left[k_s(2z)\right] - 2\mathbb{E}_{z\sim\phi(z)}\left[k_s^2(z)\right] \right).
    $$
    Since $\int_{0}^{s} \phi(z)dz \sim \tilde{\mathcal{O}}(\frac{1}{n})$ (Assumption~\ref{def:sparsity}) and $k_s(z) \in [0, 1]$, we have $\epsilon=\Theta(\frac{1}{D})$.

    To use this RFF to approximate the attention mechanism $ \bm{A} \bm{V}$, where $\bm{V}\in \mathbb{R}^{n \times d}$ is the value matrix and $\bm{A} \in \mathbb{R}^{n \times n}$ is the unnormalized attention matrix with each entry $\bm{A}_{\bm{x},\bm{y}} = k_s(\bm{x},\bm{y})$, 
    we first obtain $\bm{X}', \bm{Y}'\in \mathbb{R}^{n \times D}$ with each row given by $\psi(\bm{x})$ and $\psi(\bm{y})$, respectively, which requires $nD(2d-1) + nd = \Theta(nDd)$ FLOPs. Then,
    $\bm{A}\bm{V} \approx \bm{X}' (\bm{Y}'^\top \bm{V})$ needs $Dd(2n-1) + nd(2D-1)= \Theta(nDd)$ FLOPs. Note that when using RFF, it is important to first compute $\bm{Y}'^{\top}\bm{V}$ to avoid the complexity of $n^2$ for computing $\bm{X}' \bm{Y}'^\top$. Thus, the total FLOPs required are $F = \Theta(ndD)$.

    Therefore, with $\epsilon=\Theta(\frac{1}{D})$ and $F = \Theta(ndD)$, we have $\epsilon = \Theta\left( \frac{nd}{F} \right)$.
\end{proof}

\subsection{Proof of Theorem~\ref{theorem:LSH}}
\begin{lemma}\label{lemma:probBound}
Consider the collision probability $p_r(z)$ in E$^2$LSH, employing the hash function $h_{\bm{a}, b}(\bm{x}) = \lfloor \frac{\bm{a}\cdot \bm{x} + b}{r} \rfloor$, for two points $\bm{x}, \bm{y} \in \mathbb{R}^d$ with distance $z=\|\bm{x}-\bm{y}\|_2$. This probability can be bounded as follows:

For $z < r$,
$$
1 - \sqrt{\frac{2}{\pi}} \frac{z}{r} \leq p_r(z) \leq 1-\sqrt{\frac{1}{2 \pi}} \frac{z}{r}.
$$
For $ z \geq r$,
$$
\frac{\sqrt{2}}{3 \sqrt{\pi}} \frac{r}{z} \leq p_r(z) \leq \frac{1}{\sqrt{2 \pi}} \frac{r}{z}.
$$
Consequently, the expected collision probability for pairs of points in the point cloud systems described in Assumption~\ref{def:sparsity} can be bounded as:
$$
\mathbb{E}_{z \sim \phi(z)}\left[p_r(z)\right] \leq \int_0^r\left(1-\sqrt{\frac{1}{2 \pi}} \frac{z}{r}\right) \phi(z) d z+\sqrt{\frac{1}{2 \pi}} r \int_r^{\infty} \frac{1}{z} \phi(z) d z.
$$
\end{lemma}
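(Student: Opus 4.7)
The plan is to reduce everything to a one-dimensional Gaussian calculation and then invoke standard tail and Taylor estimates. Because $\bm{a}\sim\mathcal{N}(0,\bm{I}_d)$, the projected difference $u=\bm{a}\cdot(\bm{x}-\bm{y})$ is distributed as $\mathcal{N}(0,z^2)$ with $z=\|\bm{x}-\bm{y}\|_2$, so $p_r$ depends on $\bm{x},\bm{y}$ only through $z$. Conditioning on $u$, the uniform shift $b\sim\mathcal{U}(0,r)$ produces a collision exactly with probability $\max(0,1-|u|/r)$. Integrating $u$ out and substituting $s=t/z$ gives the closed form
\begin{equation*}
p_r(z) \;=\; \frac{2}{z\sqrt{2\pi}}\int_{0}^{r}\!\Bigl(1-\tfrac{t}{r}\Bigr)e^{-t^{2}/(2z^{2})}\,dt \;=\; 2\Phi(c)-1-\frac{2}{c\sqrt{2\pi}}\bigl(1-e^{-c^{2}/2}\bigr), \qquad c \mathrel{\mathop:}= r/z,
\end{equation*}
where $\Phi$ is the standard normal CDF. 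All four bounds become inequalities about this single function of $c$.

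For the regime $z<r$ (equivalently $c\geq 1$), the lower bound $p_r(z)\geq 1-\sqrt{2/\pi}\,z/r$ rearranges, after cancelling the $1$ and the leading $c^{-1}$ term, into the classical Mill's-ratio upper tail bound $1-\Phi(c)\leq \tfrac{1}{c\sqrt{2\pi}}e^{-c^{2}/2}$, which is known for all $c>0$. The upper bound $p_r(z)\leq 1-\sqrt{1/(2\pi)}\,z/r$ rearranges to $2\bigl(1-\Phi(c)\bigr)\geq \tfrac{1}{c\sqrt{2\pi}}\bigl(2e^{-c^{2}/2}-1\bigr)$. I would handle this in two pieces: for $c\geq\sqrt{2\ln 2}$ the right-hand side is nonpositive so the inequality is immediate, while on $[1,\sqrt{2\ln 2}]$ the complementary Mill's lower bound $1-\Phi(c)\geq \tfrac{c}{(1+c^{2})\sqrt{2\pi}}e^{-c^{2}/2}$ reduces the problem to a one-variable elementary inequality on a bounded interval.

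For the regime $z\geq r$ (equivalently $c\leq 1$), both bounds come from cheap Taylor estimates applied to the closed form. Using $e^{-x}\geq 1-x$ gives $\tfrac{2}{c\sqrt{2\pi}}(1-e^{-c^{2}/2})\leq \tfrac{c}{\sqrt{2\pi}}$, while $e^{-t^{2}/2}\geq 1-t^{2}/2$ yields $2\Phi(c)-1\geq \tfrac{2}{\sqrt{2\pi}}(c-c^{3}/6)$. Subtracting gives $p_r(z)\geq \tfrac{c}{\sqrt{2\pi}}(1-c^{2}/3)\geq \tfrac{2c}{3\sqrt{2\pi}}=\tfrac{\sqrt{2}}{3\sqrt{\pi}}\tfrac{r}{z}$ when $c\leq 1$, which is the claimed lower bound and shows where the constant $\sqrt{2}/(3\sqrt{\pi})$ originates. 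For the matching upper bound $p_r(z)\leq c/\sqrt{2\pi}$, the naive estimate $p_r(z)\leq \Pr[|u|\leq r]=2\Phi(c)-1\leq \sqrt{2/\pi}\,c$ is off by a factor of $2$; the extra factor comes from also using the $|u|/r$ term. Specifically, using $e^{-x}\leq 1-x+x^{2}/2$ to bound $\tfrac{2}{c\sqrt{2\pi}}(1-e^{-c^{2}/2})$ from below by $\tfrac{c}{\sqrt{2\pi}}(1-c^{2}/4)$ and combining with $2\Phi(c)-1\leq \tfrac{2c}{\sqrt{2\pi}}$ gives an inequality that is slightly too weak; I would therefore define $f(c)=p_r(z)-c/\sqrt{2\pi}$, observe $f(0)=0$, and show $f'(c)\leq 0$ on $[0,1]$ by direct differentiation and a single sign-check of the resulting expression in $e^{-c^{2}/2}$.

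The expectation inequality is then routine bookkeeping: split $\int_{0}^{\infty}p_r(z)\phi(z)\,dz$ at $z=r$, apply the upper bound $1-\sqrt{1/(2\pi)}\,z/r$ on $[0,r]$ and the upper bound $\tfrac{r}{z\sqrt{2\pi}}$ on $[r,\infty)$, and collect the two integrals. The main obstacle is the upper bound in the $z<r$ regime, where the slope $\sqrt{1/(2\pi)}$ is sharp and there is essentially no slack, so one cannot afford any lossy estimate of $\Phi(c)$ near $c=1$; all remaining steps are routine Gaussian computations.
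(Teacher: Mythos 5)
Your proposal is correct, and it follows the same overall blueprint as the paper's proof: both obtain the closed form of $p_r(z)$ (the paper quotes the integral from Datar et al.\ and writes it via $\operatorname{Erf}$, you re-derive it from the $\max(0,1-|u|/r)$ conditioning and write it via $\Phi$; the two expressions coincide under $u=c/\sqrt{2}$), both reduce to a one-variable function of the ratio $r/z$, and both obtain the expectation bound by splitting the integral at $z=r$. The divergence is tactical. The paper verifies all four inequalities uniformly by setting $g(u)=f(u)-(\text{candidate bound})$ and sign-checking $g'$ together with boundary values; you instead dispatch the $z<r$ pair by reducing them to the two standard Mills-ratio tail bounds $\tfrac{c}{(1+c^2)\sqrt{2\pi}}e^{-c^2/2}\le 1-\Phi(c)\le \tfrac{1}{c\sqrt{2\pi}}e^{-c^2/2}$ (with a sign case-split at $c=\sqrt{2\ln 2}$), and the $z\ge r$ lower bound by Taylor estimates $e^{-x}\ge 1-x$ and $e^{-t^2/2}\ge 1-t^2/2$, which cleanly exposes where the constant $\sqrt{2}/(3\sqrt{\pi})$ comes from. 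Your remaining derivative argument — $\tfrac{d}{dc}p_r=\tfrac{2}{c^2\sqrt{2\pi}}(1-e^{-c^2/2})$, so $f'(c)\le 0$ reduces to $e^{-x}\ge 1-x$ — is essentially the same computation the paper performs for its $g(u)=f(u)-u/\sqrt{\pi}$. Both routes are elementary and correct; yours is somewhat more modular since half the work is offloaded to textbook Gaussian tail inequalities, and I verified that your reduction on $[1,\sqrt{2\ln 2}]$ does close (it amounts to $(1+c^2)e^{c^2/2}\ge 2$, which holds for $c\ge 1$). One cosmetic correction: the $z<r$ upper bound is not ``sharp with no slack'' — the true slope of $p_r$ at $z=0^+$ is $\sqrt{2/\pi}$, twice the claimed $\sqrt{1/(2\pi)}$, and at $z=r$ one has $p_r(r)\approx 0.37$ versus the bound $1-\sqrt{1/(2\pi)}\approx 0.60$ — which is precisely why your argument has room to succeed.
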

\begin{proof}
    With hash functions from E$^2$LSH, the collision probability for two distinct points $\bm{x}, \bm{y} \in \mathbb{R}^d$ with distance $z=\|\bm{x}-\bm{y}\|_2$ is~\cite{datar2004locality}:
    $$
    p_r(z)=\operatorname{P}\left[h_{\bm{a}, b}(\bm{x})=h_{\bm{a}, b}(\bm{y})\right] = \int_0^r \frac{1}{z} f_2\left(\frac{t}{z}\right)\left(1-\frac{t}{r}\right) dt,
    $$
    where $f_2(\cdot)$ denotes the PDF of the absolute value of the $2$-stable distribution.
    Thus, 
    $$
    p_r(z)=\operatorname{Erf}\left(\frac{r}{\sqrt{2} z}\right)-\sqrt{\frac{2}{\pi}} \frac{z}{r}\left(1-\exp \left(-\frac{r^2}{2 z^2}\right)\right),
    $$
    where $\operatorname{Erf}(u)=\frac{2}{\sqrt{\pi}} \int_{0}^{u} e^{-t^2} \mathrm{~d} t$.

    Let $u = \frac{r}{\sqrt{2}z}$, $p_r(z) = f(u) = \operatorname{Erf}(u) - \frac{1}{u \sqrt{\pi}}\left(1-\exp \left(-u^2\right)\right)$. We are to bound $f(u)$ for $u > \frac{1}{\sqrt{2}}$ and $0 < u \leq \frac{1}{\sqrt{2}}$.

   \textbf{Bounding $f(u)$ for $u > \frac{1}{\sqrt{2}}$.} Consider $g(u) = f(u) - ( 1 - \frac{1}{u\sqrt{\pi}} )$. Since $g'(u) = \frac{-\exp(-u^2)}{\sqrt{\pi} u^2} < 0$, $g(\frac{1}{\sqrt{2}})>0$, and $\lim_{u\rightarrow \infty} g(u) = 0$, we have $f(u) \geq 1-\frac{1}{u \sqrt{\pi}}$. 
   Now, consider $g(u) = f(u) - ( 1 - \sqrt{\frac{1}{\pi}} \frac{1}{2u} )$.
   Since $g'(u) = \frac{-2+\exp(u^2)}{2 \exp(u^2) \sqrt{\pi} u^2}$, it has a local minima at $u = \sqrt{\ln 2}$. Because $g'(u) < 0$ when $\frac{1}{\sqrt{2}} < u < \sqrt{\ln 2}$, $g'(u) > 0$ when $u > \sqrt{\ln 2} $, $g(\frac{1}{\sqrt{2}}) < 0$, and $\lim_{u\rightarrow \infty} f(u) = 0$, we have $f(u)\leq1-\sqrt{\frac{1}{\pi}} \frac{1}{2 u}$.

   \textbf{Bounding $f(u)$ for $0 < u \leq \frac{1}{\sqrt{2}}$.} Consider $g(u) = f(u) - \frac{2}{3\sqrt{\pi}}u$. Since $g'(u)=\frac{3 - (2u^2+3/\exp(u^2))}{3 \sqrt{\pi}u^2} >0$ when $0 < u \leq \frac{1}{\sqrt{2}}$, $f(\frac{1}{\sqrt{2}}) > 0$, and $\lim_{u\rightarrow0}f(u)=0$, we have $f(u) \geq \frac{2}{3 \sqrt{\pi}} u$. Now, consider $g(u) = f(u) - \frac{1}{\sqrt{\pi}}u$. Since $g'(u) = \frac{1-(u^2+\exp(-u^2))}{\sqrt{\pi}u^2} < 0$ when $0<u \leq \frac{1}{\sqrt{2}}$, $f(\frac{1}{\sqrt{2}}) < 0$, and $\lim_{u\rightarrow0}f(u)=0$, we have $f(u) \leq \frac{1}{\sqrt{\pi}}u$.

   Therefore, substituting the $u$ back, we yield the above bounds for $p_r(z)$, which directly gives the bound for $\mathbb{E}_{z \sim \phi(z)}\left[p_r(z)\right]$.
\end{proof}

\begin{lemma}\label{lemma:lshFLOPs}
    Consider using E$^2$LSH to approximate the attention mechanism $ \bm{A} \bm{V}$, where $\bm{V}\in \mathbb{R}^{n \times d}$ is the value matrix and $\bm{A} \in \mathbb{R}^{n \times n}$ is the unnormalized attention matrix with each entry $\bm{A}_{\bm{x},\bm{y}} = k_s(\bm{x},\bm{y})$. If performing OR-only LSH with $m_1$ hash functions for the approximation, the required FLOPs are $F = \Theta\left(m_1 d n^2 \mathbb{E}_{z \sim \phi(z)}\left[p_r(z)\right]  + m_1 nd \right)$. If constructing $m_1$ hash tables (OR LSH), with each having $m$ hash functions (AND LSH) for the approximation, the required FLOPs are $F=\Theta\left(m_1 d n^2 \mathbb{E}_{z \sim \phi(z)}\left[p_r(z)^m\right]+m_1 n d m\right)$.
\end{lemma}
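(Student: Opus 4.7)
}
The plan is to decompose the LSH-based attention computation into two disjoint stages---(i) hash code construction for all queries/keys and (ii) attention computation only between colliding query--key pairs---count the FLOPs in each stage, and then sum. For (i), I will use the explicit form $h_{\bm{a},b}(\bm{x})=\lfloor (\bm{a}\cdot\bm{x}+b)/r\rfloor$ from Sec.~\ref{sec:prelim}: each hash value needs one inner product in $\mathbb{R}^d$ plus $\mathcal{O}(1)$ arithmetic, so $\Theta(d)$ FLOPs per point per hash function. Applied to all $n$ queries/keys with $m_1$ hash functions (OR-only) gives $\Theta(m_1 n d)$, whereas OR \& AND with $m_1$ tables of $m$ functions gives $\Theta(m_1 m n d)$, which are exactly the additive hashing terms appearing in the claim.

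For stage (ii), I will identify the set $S^{(i)}$ of pairs colliding in table $i$ and express the expected attention FLOPs as $\Theta(d)\cdot\sum_{i=1}^{m_1}\mathbb{E}|S^{(i)}|$, using Definition~\ref{def:kernel} (kernel evaluation is $\Theta(d)$) plus the $\Theta(d)$ cost of multiplying each entry $k_s(\bm{x},\bm{y})$ into the corresponding row of $\bm{V}$ and accumulating. Note that \proj aggregates tables as $\sum_i \bm{A}^{(i)}\bm{V}$, so pairs colliding in multiple tables are charged once per table, matching this bookkeeping. It remains to compute $\mathbb{E}|S^{(i)}|$. Within a single table, $m$ hash functions are drawn i.i.d., so conditional on $\|\bm{x}-\bm{y}\|_2=z$ the pair collides with probability exactly $p_r(z)^m$ (this is where AND-construction enters). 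Summing over all $\Theta(n^2)$ ordered pairs and taking the expectation with respect to the empirical distance density $\phi(z)$ gives
\begin{equation*}
\mathbb{E}|S^{(i)}| \;=\; \Theta\!\left(n^2\,\mathbb{E}_{z\sim\phi(z)}\!\left[p_r(z)^m\right]\right),
\end{equation*}
and summing over the $m_1$ tables multiplies by $m_1$. Setting $m=1$ recovers the OR-only case.

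Combining the two stages yields $F=\Theta\bigl(m_1 d n^2\mathbb{E}_{z\sim\phi(z)}[p_r(z)^m]+m_1 nd m\bigr)$ and, with $m=1$, $F=\Theta\bigl(m_1 d n^2\mathbb{E}_{z\sim\phi(z)}[p_r(z)]+m_1 nd\bigr)$, as claimed. The main obstacle I anticipate is justifying the tight $\Theta$ (rather than $\mathcal{O}$) bound for the attention stage: one has to argue that the dominant work is indeed the pairwise kernel-and-accumulate operations, and that bookkeeping overhead (grouping points into buckets, e.g.\ by a linear-time counting-sort on integer hash codes, and iterating through bucket contents) is absorbed into the same order. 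A subsidiary concern is confirming that no implementation trick can batch the work across tables to save a factor of $m_1$; because the $m_1$ sets $S^{(i)}$ are drawn from independent hash families, the expected total workload is genuinely additive in $i$, preserving the $m_1$ factor out front.
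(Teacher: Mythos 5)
Your proposal is correct and follows essentially the same two-stage accounting as the paper: $\Theta(d)$ per point per hash function for the hashing stage, and $\Theta(d)$ per colliding pair for the attention stage, with the expected collision count $\Theta(n^2\,\mathbb{E}_{z\sim\phi(z)}[p_r(z)^m])$ per table summed over $m_1$ tables. The tightness concern you flag is handled in the paper by the same bucket-level identity you gesture at: with $n_b$ points and $o_b=\binom{n_b}{2}$ collisions in bucket $b$, the blockwise product costs $\sum_b d\,n_b(2n_b-1)=4d\sum_b o_b + d\sum_b n_b$, which pins the attention-stage cost to exactly the collision count plus an $nd$ term.
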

\begin{proof}
    First, consider performing OR-only LSH with $m_1$ hash functions.
    Let $\mathbb{E}_{z \sim \phi(z)}[p_r(z)]$ denote the expected collision probability for pairs of points in the point cloud system described in Assumption~\ref{def:sparsity}.

    \textbf{1. Obtaining Hash Codes \& Computing Kernel Values.} With $n$ points in the system, it needs $n(2d-1)$ FLOPs to obtain hash code from each hash function. Then, $\binom{n}{2}$ point pairs result in $\binom{n}{2} \mathbb{E}_{z \sim \phi(z)}[p_r(z)]$ expected number of collisions (pairs of points with the same hash code) from each of the $m_1$ hash functions. To compute $k_s$ for all collided pairs, it requires $\Theta(d \binom{n}{2} \mathbb{E}_{z \sim \phi(z)}[p_r(z)])$ FLOPs.

    \textbf{2. Approximating the Attention Mechanism.} If one hash function results in $B$ buckets and in each bucket $b$ there are $o_b$ collisions (pairs of points with the same hash code) and $n_b$ unique points, then $\sum_{b\in[B]} o_b = \binom{n}{2} \mathbb{E}_{z\sim \phi(z)}[p_r(z)]$, $\binom{n_b}{2} = o_b$, and $\sum_{b\in[B]} n_b = n$. 
    Computing $\widehat{\bm{A}}\widehat{\bm{V}}$ for each bucket $b$ needs
    $d n_b (2n_b - 1)$ FLOPs, since $\widehat{\bm{A}}$ is $n_b \times n_b$ and $\widehat{\bm{V}}$ is $n_b \times d$. So, it needs $4d \binom{n}{2}  \mathbb{E}_{z\sim \phi(z)}[p_r(z)] + nd$ FLOPs for all buckets since $\sum_{b\in[B]} d n_b (2n_b - 1) = \sum_{b\in[B]} 2(n_b^2 - n_b)d + n_b d = \sum_{b\in[B]} 4 o_bd + n_bd $. 

    \textbf{3. Combing $m_1$ Hash Results.} The above process is repeated for $m_1$ times, and $(m_1-1)nd$ extra FLOPs are needed to combine the $m_1$ hash results. Therefore, the total FLOPs required is $F = \Theta(m_1 dn^2 \mathbb{E}_{z \sim \phi(z)}\left[p_r(z)\right] + m_1 nd)$ if OR-only LSH is performed with $m_1$ hash functions.
    
    Now, consider constructing $m_1$ hash tables (OR LSH), each with $m$ hash functions (AND LSH). 
    This results in collision probability $p_r(z)^m$ and now needs $\Theta(m_1 ndm)$ FLOPs to obtain all hash codes and combine all results. Then, the total FLOPs required are $F=\Theta\left(m_1 d n^2 \mathbb{E}_{z \sim \phi(z)}\left[p_r(z)^m\right]+m_1 n d m\right)$.
\end{proof}

\theoremLSH*
\begin{proof}
Consider performing OR-only E$^2$LSH with $m_1$ hash functions to approximate $k_s$ in the point cloud system described in Assumption~\ref{def:sparsity}. 
The resulting squared error averaged over all point pairs in the system is then $\epsilon=\mathbb{E}_{z \sim \phi(z)}\left[(1 - p_r(z))^{m_1} k_s^2(z)\right]$, where $p_r(z)$ is the collision probability of E$^2$LSH.

First, the complexity $F$ has a natural lower bound: Due to Lemma~\ref{lemma:lshFLOPs} and the bound of $p_r(z)$ in Lemma~\ref{lemma:probBound}, $$F=\Theta(d m_1 n^2 \mathbb{E}_{z \sim \phi(z)}\left[p_r(z)\right] +m_1nd)\geq  \Theta\left(d m_1 n^2 (\int_0^r \phi(z)(1-\sqrt{\frac{2}{\pi}}\frac{z}{r}) dz + \int_r^1 \phi(z)\frac{\sqrt{2}}{3 \sqrt{\pi}} \frac{r}{z} dz)\right)$$$$\geq\Theta\left(d m_1 n^2 (\int_0^r \phi(z)dz + r\int_r^1 \phi(z)dz)\right)\geq \Theta\left(d m_1 n^2r\right).$$

\textbf{Upper bound:} Here, we first show that for some positive $c_3$, $$
\epsilon = \tilde{\mathcal{O}}( \exp \left(-\frac{c_3 F}{d n^2 s}\right) \frac{1}{n}). 
$$ 

% Suppose we choose $r\geq \frac{1}{n}$ such that $\int_{0}^r \phi(z) dz \leq c_1 r$ and $\int_{r}^\infty \frac{1}{z} \phi(z) dz \leq c_2$ for some positive constants $c_1$ and $c_2$. By assumption, we know the existence of such $r$. Then, due to Lemma~\ref{lemma:probBound}, we have $\mathbb{E}_{z \sim \phi(z)}\left[p_r(z)\right] \leq c'r$, where $c^{\prime}=c_1+c_2 \sqrt{\frac{1}{2 \pi}}$. Then, since $ F=\Theta(d m_1 n^2 \mathbb{E}_{z \sim \phi(z)}\left[p_r(z)\right] +m_1nd) $ (Lemma~\ref{lemma:lshFLOPs}), we have $F = \mathcal{O} (d m_1 n^2c'r) $.

% and we assume that $r\leq s$. If such $r$ does not exist, it means for any $r\leq s$
We are only interested in the regime with limited complexity where $F=O(dn^2s)$. Otherwise, the above error is almost 0 and the complexity is already super linear because $s\gg \frac{1}{n}$ in general (see the discussion in Sec.\ref{sec:LIB}). Since for OR-only E$^2$LSH, $F\geq \Theta\left(d m_1 n^2r\right)$, 
this means, in practice, to satisfy $F=O(dn^2s)$, we will set $r \leq s$. To better understand this point, if we set $r > s$, intuitively, one single hash function is sufficient to put points within distance $s$ into the same hash bucket with high probability, which is able to compute the attention weights accurately. However, in this case, there will be $n\sqrt{s}$ many points mapped into the same bucket, which gives  complexity as much as $\Omega\left(d m_1 n^2s\right)$. This can be understood from the lower bound $F\geq \Theta\left(d m_1 n^2r\right)$:x when $r>s$, the above bound of $F$ implies $F\geq \Theta\left(d m_1 n^2r\right)\geq \Theta\left(d m_1 n^2s\right)$.

Let us next suppose $r\leq s$. With Lemma~\ref{lemma:probBound}, we have 
$$
\epsilon \leq   ( 1 - c''\frac{r}{s} )^{m_1}  \mathbb{E}_{z \sim \phi(z)}\left[ k_s^2(z)\right]\leq \exp \left(-\frac{c'' m_1 r}{s}\right)  \mathbb{E}_{z \sim \phi(z)}\left[ k_s^2(z)\right],
$$
where $c'' = \frac{\sqrt{2}}{3 \sqrt{\pi}}$.

By assumption, there exists $r$ such that $\int_{0}^r \phi(z) dz \leq c_1 r$ and $\int_{r}^\infty \frac{1}{z} \phi(z) dz \leq c_2$ for some positive constants $c_1$ and $c_2$, we have $\mathbb{E}_{z \sim \phi(z)}\left[p_r(z)\right] \leq c'r$, where $c^{\prime}=c_1+c_2 \sqrt{\frac{1}{2 \pi}}$. Then, since $ F=\Theta(d m_1 n^2 \mathbb{E}_{z \sim \phi(z)}\left[p_r(z)\right] +m_1nd) $ (Lemma~\ref{lemma:lshFLOPs}), we have $F = \mathcal{O} (d m_1 n^2c'r) $. Thus,
$$
\epsilon \leq \exp \left(-\frac{c_3 F}{d n^2 s}\right) \mathbb{E}_{z \sim \phi(z)}\left[ k_s^2(z)\right],
$$
where $c_3$ is a positive constant depending on $c_1$ and $c_2$.

Since $\int_0^s \phi(z) d z \sim \tilde{\mathcal{O}}\left(\frac{1}{n}\right)$ (Assumption~\ref{def:sparsity}) and $k_s(z) \in  [0,1]$, we have
$$
\epsilon = \tilde{\mathcal{O}}( \exp \left(-\frac{c_3 F}{d n^2 s}\right) \frac{1}{n}).
$$

Now, we lower bound $\epsilon$. 
%\siqi{to be done}

\textbf{Lower bound:} Next, we show that there exists a point cloud system $\mathcal{C}$ and the kernel $k_s$ that satisfy the assumption, while for some positive $c_5$,
$$ 
\epsilon = \Omega( \exp \left(-\frac{c_5 F}{d n^2 s}\right) \frac{1}{n}).
$$

We consider a very common case when $k_s(z) = 1$ when $z\in[0,s]$ and the point cloud is uniformly allocated in the unit ball. In this case, $\phi(z)\propto z^{d-1}$, and $s\sim \frac{1}{n^{1/d}}$ as shown in Sec.~\ref{sec:LIB}. It is easy to verify that if $r$ satisfies $r\leq s$, the conditions in the theorem statement are true: This is because $\int_{0}^r \phi(z) dz \lesssim r^d < r$ and $\int_{r}^1 \frac{1}{z}\phi(z) dz \leq \frac{d}{d-1}$. Again, we only focus on the regime $r\leq s$. It is not hard to show that when $r\gg s$, the lower bound is even higher than above. 

When $r\leq s$, the above lower bound of $F$ already gives $F = \Omega (d m_1 n^2r)$. With Lemma~\ref{lemma:probBound} and $k_s(z)=1$ for $z\leq s$, we have 
$$
\epsilon =  \int_0^s (1-p_r(z))^{m_1}\phi(z)dz \geq \int_{0}^r (1-p_r(z))^{m_1}\phi(z)dz+ \int_{r}^s (1-p_r(z))^{m_1}\phi(z)dz $$
$$\geq \Theta\left(\int_0^r\left(\frac{1}{\sqrt{2\pi}} \frac{z}{r}\right)^{m_1} z^{d-1} dz+ \int_r^s\left(1-\frac{1}{\sqrt{2\pi}} \frac{r}{z}\right)^{m_1} z^{d-1} dz \right)  \geq \Theta(r^d (\frac{1}{\sqrt{2\pi}})^{m_1} + (1-\frac{1}{\sqrt{2\pi}})^{m_1}(s^d-r^d))
$$
$$= \Theta(\exp(-c_5m_1)\cdot s^d) = \Omega( \exp \left(-\frac{c_5 F}{d n^2 r}\right)\cdot\frac{1}{n}), \quad \text{where $c_5$ is a positive constant.}
$$
This completes the proof.
\end{proof}

\subsection{Proof of Theorem~\ref{theorem:ANDLSH}}
\theoremANDLSH*
\begin{proof}
Consider using E$^2$LSH to construct $m_1$ hash tables (OR LSH), each with $m$ hash functions (AND LSH). To approximate $k_s$ in the point cloud systems described in Assumption~\ref{def:sparsity}, 
the resulting squared error averaged over all point pairs in the system 
is $\epsilon=\mathbb{E}_{z \sim \phi(z)}\left[\left(1-p_r(z)^{m}\right)^{m_1} k_s^2(z)\right]$ and the FLOPs required are $F= \Theta( m_1 d n^2 \mathbb{E}_{z \sim \phi(z)}\left[p_r(z)^{m}\right]  + m_1ndm ) $ (Lemma~\ref{lemma:lshFLOPs}).

Pick the smallest $m$ that satisfies 
$\int_{0}^{r}\phi(z)dz = \tilde{\mathcal{O}}(\frac{1}{n})$ and 
$\int_{r}^{\infty}\phi(z)\frac{r^{m}}{z^{m}}dz\leq \int_{0}^{r}(\sqrt{2\pi} - \frac{z}{r})^{m}\phi(z)dz$, where $r=ms$, combined with Lemma~\ref{lemma:probBound}, then we have 
$$
F = \mathcal{O} \left( m_1 d n^2 \int_0^{m s}\left(1-\frac{1}{\sqrt{2 \pi}} \frac{z}{m s}\right)^{m} \phi(z) d z+m_1n d m \right) = \mathcal{O}  \left( m_1 d n^2 \int_0^{m s}\phi(z) d z+m_1n d m \right).
$$

Similarly, with $r=ms$ and Lemma~\ref{lemma:probBound}, we have
$$
\epsilon \leq \left( 1 -  \left( 1 - c' \frac{s}{ms} \right)^m  \right)^{m_1} \mathbb{E}_{z \sim \phi(z)}\left[ k_s^2(z)\right] \leq  \exp(-c'' m_1) \mathbb{E}_{z \sim \phi(z)}\left[ k_s^2(z)\right],
$$
where $c'=\sqrt{\frac{2}{\pi}}$ and $c''=1-c'$.

Therefore, 
$$
\epsilon = \mathcal{O}\left(   \exp \left(-\frac{c'' F}{d n^2 \int_0^{m s} \phi(z) d z+n d m}\right)  \mathbb{E}_{z \sim \phi(z)}\left[ k_s^2(z)\right] \right).
$$

Since $\int_0^r \phi(z) d z=\tilde{\mathcal{O}}\left(\frac{1}{n}\right)$, $\int_0^s \phi(z) d z \sim \tilde{\mathcal{O}}\left(\frac{1}{n}\right)$, and $k_s(z) \in[0,1]$, we yield
$$
\epsilon = \tilde{\mathcal{O}}\left(\exp \left(- \frac{c_4F}{dn( \operatorname{polylog}(n) + m)}\right)\frac{1}{n}\right),
$$
where $c_4$ is some positive constant.
\end{proof}

\end{document}